\newcommand{\cmark}{\ding{51}}%
\def\data{\text{data}}
\def\eqref#1{equation~\ref{#1}}
\def\1{\bm{1}}
\def\va{{\bm{a}}}
\def\vc{{\bm{c}}}
\def\ve{{\bm{e}}}
\def\vf{{\bm{f}}}
\def\vh{{\bm{h}}}
\def\vk{{\bm{k}}}
\def\vm{{\bm{m}}}
\def\vq{{\bm{q}}}
\def\vs{{\bm{s}}}
\def\vv{{\bm{v}}}
\def\vx{{\bm{x}}}
\DeclareMathAlphabet{\mathsfit}{\encodingdefault}{\sfdefault}{m}{sl}
\SetMathAlphabet{\mathsfit}{bold}{\encodingdefault}{\sfdefault}{bx}{n}
\def\gA{{\mathcal{A}}}
\def\gD{{\mathcal{D}}}
\def\gL{{\mathcal{L}}}
\def\gS{{\mathcal{S}}}
\def\gT{{\mathcal{T}}}
\newcommand{\E}{\mathbb{E}}
\begin{document}

\title{Learning to Drive via Asymmetric Self-Play}

\author{
  Chris Zhang\inst{1,2}\and
  Sourav Biswas\inst{1,2}\and
  Kelvin Wong\inst{1,2}\and
  Kion Fallah\inst{1},\\
  Lunjun Zhang\inst{2}\and
  Dian Chen\inst{1}\and
  Sergio Casas\inst{1,2}\and
  Raquel Urtasun\inst{1,2}
}
\authorrunning{C.~Zhang et al.}

\institute{\mbox{Waabi \quad\quad\and
    University of Toronto } \\
  \email{\{czhang,sbiswas,kwong,kfallah,dchen,scasas,urtasun\}@waabi.ai}
}
\maketitle

\begin{abstract}
    Large-scale data is crucial for learning realistic and capable driving policies.
    However, it can be impractical to rely on scaling datasets with real data alone.
    The majority of driving data is uninteresting, and deliberately collecting new
    long-tail scenarios is expensive and unsafe.
    We propose asymmetric self-play to scale beyond real data
    with additional \emph{challenging, solvable}, and \emph{realistic} synthetic scenarios.
    Our approach pairs a teacher that learns to generate scenarios it can solve but the student cannot,
    with a student that learns to solve them.
    When applied to traffic simulation, we learn realistic policies with significantly fewer
    collisions in both nominal and long-tail scenarios.
    Our policies further zero-shot transfer to generate training data
    for end-to-end autonomy,
    significantly outperforming state-of-the-art adversarial approaches, or using real data alone.
    For more information, visit
    \href[]{https://waabi.ai/selfplay}{waabi.ai/selfplay}.
    \keywords{Autonomous Driving \and Traffic Modeling \and Self-play}
\end{abstract}

\begin{figure}[!h]
  \centering
  \includegraphics*[width=\linewidth]{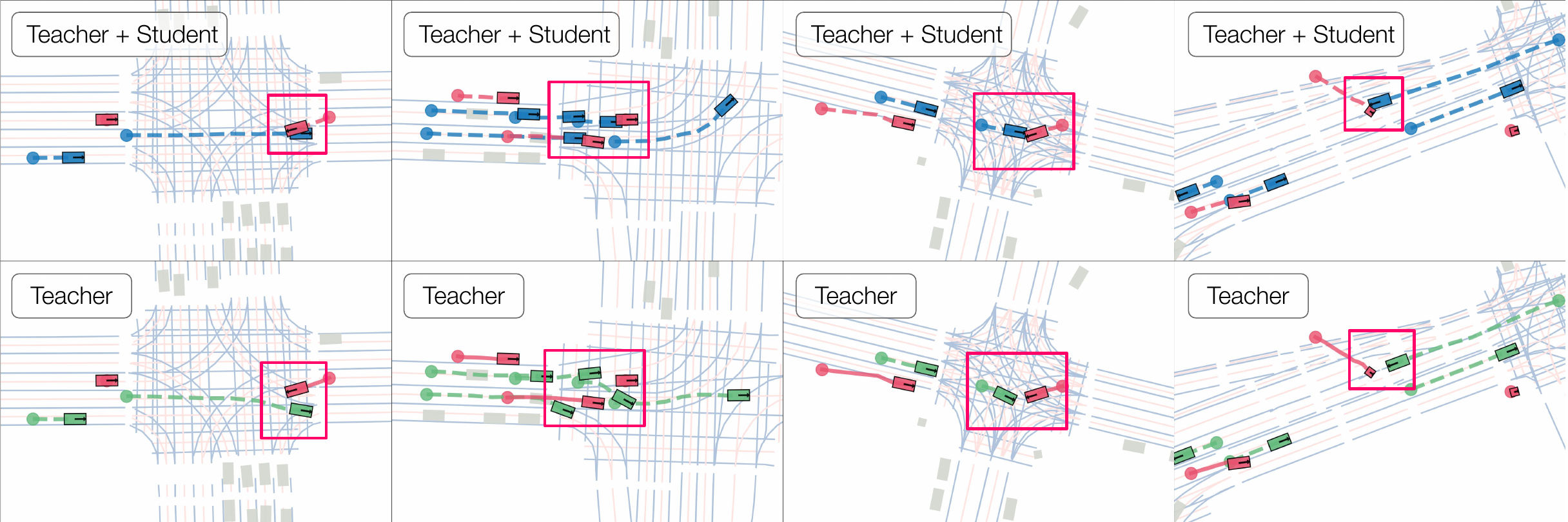}
  \caption{\textbf{Asymmetric Self-Play.} The teacher
    (\textcolor{BrickRed}{\textbf{red}}, \textcolor{ForestGreen}{\textbf{green}})
    learns to generate realistic scenarios where the student (\textcolor{NavyBlue}{\textbf{blue}})
    makes a mistake (\textbf{top}) while demonstrating a solution itself (\textbf{bottom}).
    The two are jointly trained to continually solve more scenarios.
  }
  \label{fig:teaser}
\end{figure}

\section{Introduction}
We are interested in developing policies that drive realistically like a human,
reason about complex interactions, and handle safety-critical scenarios.
While previous methods have demonstrated improved performance by applying supervised
learning with gradually increasing dataset sizes, such an approach has several limitations.
Collecting driving datasets at scale is extremely expensive,
requiring fleets of vehicles deployed for long stretches of time.
Furthermore, a central challenge of self-driving is handling rare edge cases safely,
while the majority of nominal driving data is repetitive and contains little learning signal.
Upsampling existing curated scenarios may help with data imbalance, but is
ultimately limited by the existing collected set of logs.
Yet purposefully inducing additional safety-critical scenarios in the real-world
for data collection is too dangerous of a solution at scale.
\emph{How can we continue to scale training data without relying solely on real-world collection?}

One approach is to have policies explore novel states by leveraging closed-loop simulation and
methods like reinforcement learning.
However, since other actors in simulation typically exhibit nominal behavior,
the resulting simulations can still be repetitive and unchallenging.
Likewise, leveraging a self-play approach where a policy interacts with
itself in multiagent simulation can suffer from the same issue if the policy
converges to nominal and cooperative behavior.
One can leverage human prior knowledge and design additional synthetic scenarios targeting
particularly difficult interactions like cut-ins, but scaling the diversity of these
scenarios can be difficult even with procedural generation approaches.
The realism of the scenarios may also be lacking since
actor behaviors are often scripted and hand-specified, which can lead to a sim-to-real gap in policies
trained on these scenarios.
Alternatively, adversarial optimization can be used to find trajectories that result in
collision scenarios in an automated fashion. To ensure the usefulness of these
scenarios for training, various solvability regularization approaches can be used
(\eg ensure the adversary doesn't collide with the pre-recorded trajectory,
or ensure a kinematically feasible solution exists).
Nevertheless, scenarios can still easily
end up being too easy or too difficult for the learning policy,
depending on the design of such terms.

To address these shortcomings, we propose an \emph{asymmetric self-play} mechanism
in which challenging, solvable, and realistic scenarios naturally emerge from
interactions between policies with differing objectives.
We introduce the notion of a teacher and student policy
(also referred to as Alice and Bob respectively in the literature),
where the teacher aims to generate scenarios that the student
cannot solve but the teacher itself can.
This produces challenging training scenarios for the student as opposed to repeatedly training on nominal data
where the learning signal is weak. Because the teacher and student improve together,
novel scenarios that continue to be difficult for the student can be proposed by the teacher
over the entire course of training, leading to a natural curriculum
of increasingly difficult scenarios, similar to how humans learn.
Finally, both policies are regularized to stay close to the data distribution to maintain
realism and prevent policy collapse.

Our experiments show learning to drive via asymmetric self-play results
in more realistic and robust policies. When applied to
the multiagent traffic simulation problem setting, we learn
actor policies with significantly reduced collision rates in both nominal scenarios
and held out out-of-distribution scenarios, while still maintaining
other realism metrics.
We further show that these policies can \emph{zero-shot transfer} to generate
scenarios for new, unseen policies.
This allows us to first efficiently train privileged traffic agents with self-play at scale
using lightweight state simulation,
before deploying these agents to interact with
an end-to-end autonomy policy using high-fidelity sensor simulation.
Our experiments show that
training autonomy on the resulting dataset
leads to far higher goal success rates and lower collision rates compared
to alternatives like adversarial approaches or using real data alone.

\section{Related Work}

\paragraph{Learning to drive:}
Pioneered in~\cite{alvinn}, numerous works have explored learning to drive
for applications in
autonomous driving~\cite{bojarski2016end,codevilla2018end,chauffernet,wayve2018rl,casas2021mp3,travl,zooxscaling,biswas2024quad}
and traffic simulation~\cite{simnet,trafficsim,symphony,mixsim,rtr,trajeglish,motionlm,trafficbots,intersim}.
A popular approach is open-loop behavior cloning (BC), which reduces learning to drive to a supervised learning problem.
However, BC suffers from compounding errors from distribution shift in closed-loop execution~\cite{dagger}
and a variety of techniques have been proposed to address this problem,
including data augmentation~\cite{chauffernet,titrated},
regularization based on prior knowledge~\cite{trafficsim,rtr,cao2023reinforcement},
uncertainty-based regularization~\cite{henaff2019model},
inference-time search~\cite{bits,ctg}, \etc.
Another approach is to train the driving policy in closed-loop
with closed-loop imitation learning~\cite{trafficsim,symphony,itra,titrated,mixsim,bhattacharyya2018multi},
reinforcement learning~\cite{travl,rtr,vinitsky2022nocturne,bark,metadrive,smarts,copo},
or a combination of the two~\cite{waymobcrl,rtr,zooxscaling}.
Here, the driving policy is exposed to and learns from states induced by the consequences of its actions, thereby minimizing distribution shift.
Despite these algorithmic, model, and data-scale improvements,
learning-based policies still exhibit higher-than-human failure rates~\cite{waymax,zooxscaling}, especially in highly-interactive scenarios~\cite{travl}.
As an orthogonal approach to learning better driving policies,
in this work, we explore improvements in the training data \emph{composition and curriculum} by automatically generating
challenging scenarios,
demonstrating its efficacy in both autonomous driving and traffic simulation.

\paragraph{Challenging scenarios:}
Learning to handle long-tail situations from data is difficult when
the majority of real-world driving data is uneventful with little learning signal.
One can up-sample challenging examples from a large
set of real world logs~\cite{argoverse2,womd2021,sadat2021curation,bronstein2022embedding},
but this limits us to a fixed set of existing logs,
and collecting more at scale (especially safety-critical ones) can be expensive and unsafe.
Hand-designed synthetic scenarios~\cite{pegasus2,menzel2018scenarios,openscenario,scenic,travl} that expose the driving
policy to challenging interactions
can be used,
but it is tedious to create realistic scenarios in this way and scaling these approaches to cover the diversity of the real world is impractical.
Adversarial methods can automatically discover challenging scenarios
by optimizing a fixed objective for difficulty using gradient-based optimization~\cite{king,chang2023controllable,strive},
Bayesian optimization~\cite{abeysirigoonawardena2019generating,advsim,vemprala2021adversarial},
tree search~\cite{ghodsi2021generating,koren2018adaptive},
evolutionary algorithms~\cite{klischat2019generating},
rare-event simulation~\cite{o2018scalable,norden2019efficient,sinha2020neural},
reinforcement learning~\cite{corso2019adaptive,wachi2019failure,chen2021adversarial,zhang2022adversarial,cao2023robust,ding2020learning}, or
retrieval augmented generation~\cite{ding2023realgen}.
To ensure that the adversarial scenarios are useful for training,
various constraints are added to encourage solvability and realism~\cite{strive,king,advsim}.
Unlike adversarial approaches which typically attack a fixed policy,
our self-play approach allows the teacher and student to continually update and improve.
Likewise, our solvability objective directly considers the \emph{current} student policy rather than
surrogates like the logged trajectory~\cite{advsim} or the result of a separate optimization process~\cite{strive,king},
resulting in more relevant scenarios for training.

\paragraph{Self-play:}
Self-play training is a popular approach to learning policies in increasingly complex and diverse environments by having
them interact among copies of themselves, with recent high-profile successes in Go~\cite{alphagozero},
StarCraft~\cite{alphastar}, Dota~2~\cite{openaifive}, Diplomacy~\cite{bakhtin2023diplomacy}, \etc.
In the context of self-driving, \cite{tang2019selfplay} learns RL policies in multiagent merge traffic by having them
interact in scenarios with simple rules-based agents~\cite{idm} initially and then increasingly capable past copies of themselves.
More generally, rules-based agents can be omitted and standard multiagent reinforcement learning (MARL) approaches can be used~\cite{smarts}.
However, since the RL policies share a common objective, the training scenarios become increasingly uneventful and
repetitive for learning as the policies converge in capabilities and learn to cooperate.
In contrast, we propose to use an asymmetric self-play mechanism~\cite{sukhbaatar2017intrinsic,sukhbaatar2018learning,openai2021asymmetric,gao2023asymmetric}
where %
a teacher (Alice) learns to propose challenging but self-solvable scenarios, and a student (Bob) learns to solve them.
Whereas asymmetric self-play was first proposed for goal-discovery in RL, we use asymmetric self-play to scale our
training data beyond what's available from the real world and learn increasingly realistic and robust driving policies.
To this end, we also augment the teacher's objective to propose scenarios that are realistic as well.

\section{Asymmetric Self-play for Driving}

\subsection{Problem Formulation}
\label{sec:traffic-modeling-preliminaries}
We begin by introducing the multiagent traffic modeling formulation.
A traffic scenario over $ T $ timesteps consists of a high definition (HD) map $ \vm $,
the joint states $ \vs_{1:T} $ for $ N $ actors over $ T $ timesteps,
and the corresponding actions $ \va_{1:T - 1} $.
We use $ s^i_t $ to denote the $ i $-th actor's state at time $ t $, which consists of its
position, heading, velocity, bounding box, and class in 2D bird's eye view.
Likewise, we use $ a^i_t $ to denote the $ i $-th actor's action at time $ t $,
which consists of its acceleration and steering angle.
Given the HD map $ \vm $ and initial states $ \vs_1 $,
we model the distribution over possible rollouts as:
\begin{equation}
    p(\vs_{2:T}, \va_{1:T-1} | \vs_1, \vm) =
    \prod_{t = 1}^{T - 1} \pi(\va_t | \vs_{\leq t}, \vm) p(\vs_{t + 1} | \vs_t, \va_t)
    \label{eq:p_rollout}
\end{equation}
where $ \pi $ is a multiagent policy controlling all actors jointly
and $ p(\vs_{t + 1} | \vs_t, \va_t) $ is the state transition dynamics, which we model with a kinematic bicycle model~\cite{lavalle2006planning}
on a per-actor basis.

\subsection{Asymmetric Self-Play Learning}
\begin{figure}[t]
    \centering
    \centerline{\includegraphics[width=1.0\linewidth]{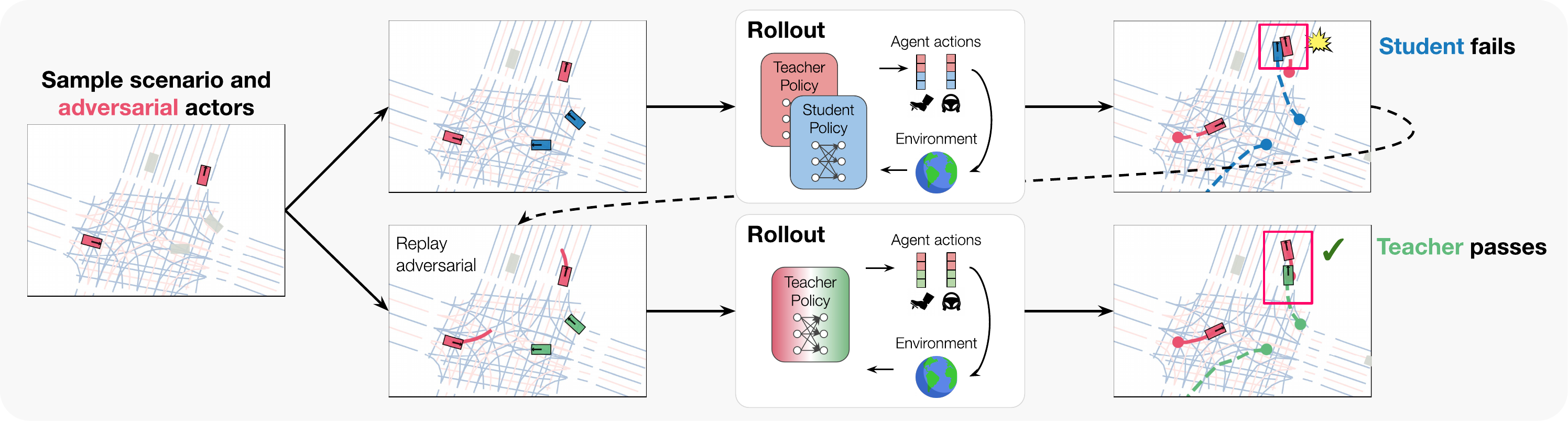}}
    \caption{\textbf{Method Overview.}
        We sample an initial scene and designate \textcolor{BrickRed}{\textbf{adversarial}} actors at random.
        The teacher must control \textcolor{BrickRed}{\textbf{adversarial}} actors such that
        the \textcolor{NavyBlue}{\textbf{student}} fails, but \textcolor{ForestGreen}{\textbf{itself}} passes.
        Adversarial actions are replayed to keep the scenario the same.
    }
    \label{fig:method}
\end{figure}

Toward our goal of automatically generating \emph{challenging, solvable,} and \emph{realistic} scenarios
for learning to drive, we design an asymmetric self-play mechanism where a teacher policy learns
to propose scenarios that it can pass but a student policy fails.
During training, the teacher will either control all actors in the scene or interact with a subset of student-controlled actors.
When the teacher interacts with the student, it aims to cause student-controlled actors to collide;
and when the teacher controls all actors, it aims to demonstrate a collision-free solution instead.
The student can then improve their driving by learning to avoid collisions in the proposed scenarios.
As the two are jointly trained, the teacher continually adapts their proposals
to the student's capabilities throughout learning.

Concretely, let $\pi_T$ and $\pi_S$ be the multiagent teacher and student policies respectively.
A scene can be entirely controlled by the teacher by only sampling actions from $\pi_T$
(\cref{eq:p_rollout}).
However, it is also possible for the two policies to \emph{interact} by controlling different actors
within the same scene.
If we partition $N$ actors into two sets $\gT$ and $\gS$, then the two policies $\pi_T$ and $\pi_S$ can
come together as $\pi_{TS}$ to jointly control the scene,
\begin{equation}
    \label{eq:multiagent-policy}
    \pi_{TS}(a_t^i | \vs_{\leq t}, \vm) = \begin{cases}
        \pi_{T}(a_t^i | \vs_{\leq t}, \vm) & \textrm{if }  i \in \gT \\
        \pi_{S}(a_t^i | \vs_{\leq t}, \vm) & \textrm{if }  i \in \gS
    \end{cases}
\end{equation}
and  $\pi_{TS}(\va_t | \vs_{\leq t}, \vm) = \prod_{i=1}^N \pi_{TS}(a_t^i | \vs_{\leq t}, \vm) $.

The teacher's goal is to generate challenging, solvable, and realistic scenarios,
so we define its objective as: 
\begin{equation}
    R_T(\vs_1, \vm) =
    \underbrace{C(\pi_{TS}, \gS)}_{\text{Challenging}}
    -\underbrace{C(\pi_T, N)}_{\text{Solvable}}
    + \beta \underbrace{(I_\data(\pi_T) + I_\data(\pi_{TS}))}_{\text{Realistic}}
    \label{eq:alice-objective}
\end{equation}
where
\begin{gather}
    C(\pi, \gA) =  \E_{\pi | \vs_1, \vm} \left[\sum_{i\in \gA} c_i(\vs_{\leq T})\right]\\
    I_\data(\pi) =  \E_{\pi | \vs_1, \vm} \left[-\log p_\data( \vs_{\leq T} | \vm )\right]
\end{gather}
Here $c_i(\vs)$ is an indicator function that equals 1 if actor $i$ fails (collides) and 0 otherwise.
The first term $C(\pi_{TS}, \gS)$ thus encourages the teacher to generate challenging scenarios where student-controlled actors fail.
The second term $-C(\pi_T, N)$ encourages the teacher $\pi_T$ to generate solvable scenarios where it can
demonstrate a collision-free rollout when controlling all $ N $ actors.
The final term $\beta(I_\data(\pi_T) + I_\data(\pi_{TS}))$ encourages the teacher to generate realistic scenarios
(when the teacher controls all actors and when the teacher interacts with the student respectively),
where $p_\data$ is the data distribution\footnote{
    We approximate $p_\data$ by using
    the ground truth rollout $\vs_{\text{data}}$ from real logs
    and assuming $p_{\text{data}}(\vs, \va | \vc)  \propto \exp \left[ -D (\vs, \vs_{\text{data}})\right]$
    where $D$ is the Huber loss.
}
and $\beta$ is a hyperparameter controlling the regularization strength.

Conversely, the student's objective is to control its actors
to avoid failures and behave realistically when interacting with the teacher.
\begin{equation}
    R_S(\vs_1, \vm) = -C(\pi_{TS}, \gS) + \beta I_\data(\pi_{TS})\label{eq:bob-objective}
\end{equation}
Our learning framework is inspired by and resembles
single-agent asymmetric self-play~\cite{openai2021asymmetric,sukhbaatar2017intrinsic}
where the teacher searches for goal states that the student cannot reach.
In our multiagent setting, the notion of a reachable state is instead replaced with
the notion of a solvable scenario, which depends on \emph{interaction} between the teacher and student.
Over the course of training, the teacher and student learn together to generate a curriculum
until an equilibrium is reached.

\subsection{Theoretical Analysis}
We now prove that for universal policies,
our asymmetric self-play objective trains the student to pass all scenarios that have
a reasonably realistic solution.

\begin{definition}
    A policy $\pi_Y$ is $\alpha$-$\beta$-optimal if $\forall \pi_X$ where
    $I_\data(\pi_{XY}) > \alpha$ and $C(\pi_{X}, N) = 0$,
    \begin{equation}\label{eq:beta-optimal-condition}
        \left( C(\pi_{XY}, \gS) > 0 \right) \iff \left(I_\data(\pi_X) < I_\data(\pi_{XY}) - \frac{1}{\beta} \right)
    \end{equation}
\end{definition}
Intuitively, an $\alpha$-$\beta$-optimal policy
will only fail an $\alpha$-realistic solvable scenario (as demonstrated by $C(\pi_X, N) = 0$)
if the log likelihood of all possible solutions
is at least $1/\beta$ lower than the log likelihood of the failure under the data distribution,
where $\beta > 0$ controls the realism regularization strength
and is arbitrarily set.
\begin{lemma}\label{lemma:realism-reward}
    If $\pi_T$ and $\pi_S$ are in equilibrium
    ($\pi_T$ cannot improve without changing $\pi_S$ and vice versa),
    then
    $R_T \leq 2 \beta I_\data(\pi_{TS})$.
\end{lemma}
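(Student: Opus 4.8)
The plan is to convert the two-sided statement into a lower bound on the student's equilibrium value and then prove that bound by exhibiting one concrete student deviation.

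First I would eliminate $R_S$ algebraically. The student's objective gives $C(\pi_{TS},\gS) = \beta\, I_\data(\pi_{TS}) - R_S$, and plugging this into the teacher's objective \eqref{eq:alice-objective} yields
\[
  R_T \;=\; 2\beta\, I_\data(\pi_{TS}) \;-\; R_S \;-\; C(\pi_T,N) \;+\; \beta\, I_\data(\pi_T) .
\]
So $R_T \le 2\beta\, I_\data(\pi_{TS})$ is equivalent to $\beta\, I_\data(\pi_T) - C(\pi_T,N) \le R_S$, and it suffices to show the student's attained value is at least $\beta\, I_\data(\pi_T) - C(\pi_T,N)$.

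To get this, I would exhibit a student policy that already achieves it. Let $\pi_S'$ be the policy that imitates, on the student-controlled actors $\gS$, whatever $\pi_T$ does on $\gS$ in the teacher's own collision-free demonstration, i.e., in the rollout in which $\pi_T$ drives all $N$ actors while the adversarial actors reuse their replayed trajectories. Because the adversarial actors' actions are replayed and therefore identical in the interaction and in that demonstration, and because policies are universal (so $\pi_S'$ can represent this state-conditioned imitation), the closed-loop rollout generated by $\pi_T$ and $\pi_S'$ coincides with the demonstration rollout. Hence $I_\data(\pi_{TS'}) = I_\data(\pi_T)$ and the failure counts satisfy $C(\pi_{TS'},\gS) = C(\pi_T,\gS) \le C(\pi_T,N)$, the inequality because each $c_i \ge 0$. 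Therefore $R_S(\pi_T,\pi_S') = -C(\pi_{TS'},\gS) + \beta\, I_\data(\pi_{TS'}) \ge -C(\pi_T,N) + \beta\, I_\data(\pi_T)$. Since $\pi_S$ is in equilibrium it cannot be improved upon by $\pi_S'$, so $R_S \ge \beta\, I_\data(\pi_T) - C(\pi_T,N)$; substituting back into the displayed identity gives $R_T \le 2\beta\, I_\data(\pi_{TS})$.

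I expect the main obstacle to be justifying that the deviating student reproduces the teacher's demonstration rollout exactly. This rests on two ingredients already baked into the setup: the replay mechanism, which makes the proposed scenario (initial state, map, and adversarial trajectories) a single fixed object shared by the interaction and the demonstration, so that altering $\pi_S$ leaves the adversarial behavior untouched; and universality of the policy class, which makes $\pi_S'$ realizable. I would also want to state explicitly that $C(\pi_T,N)$ and $I_\data(\pi_T)$ in the argument are exactly the solvability and realism terms of \eqref{eq:alice-objective}, which holds by definition of the demonstration rollout. Finally, note that only the student's half of the equilibrium condition is used; the teacher's half, and the particular value of $\beta$, are not needed for this bound.
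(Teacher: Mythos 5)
Your proof is correct and takes essentially the same route as the paper's: both reduce the bound to showing the student's equilibrium value is at least $-C(\pi_T,N)+\beta I_\data(\pi_T)$, and both establish this via the deviation in which the student copies the teacher's behavior on $\gS$ from the collision-free demonstration. You phrase it directly where the paper argues by contradiction, and you make explicit the step $C(\pi_T,\gS)\le C(\pi_T,N)$ and the role of action replay, details the paper only spells out in its supplementary version of the proof.
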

\begin{proof}
    Assume  that $R_T > 2\beta I_\data(\pi_{TS})$. Then it follows
    \begin{gather}
        - C(\pi_T, N) + C(\pi_{TS}, \gS) + \beta(I_\data(\pi_T) + I_\data(\pi_{TS}))  > 2\beta I_\data(\pi_{TS}) \\
        -C(\pi_T, N)  + \beta I_\data(\pi_T)                                    >  -C(\pi_{TS}, \gS)  + \beta I_\data(\pi_{TS})
        \label{eq:alice-max-reward}
    \end{gather}
    However, \cref{eq:alice-max-reward} shows that then $\pi_S$ can improve its return (\cref{eq:bob-objective})
    by simply copying $\pi_T$, which contradicts the equilibrium assumption. \qed
\end{proof}

\begin{theorem}
    If $\pi_T$ and $\pi_S$ are in equilibrium,
    then $\pi_S$ is $\alpha$-$\beta$-optimal,
    where $\alpha = I_\data(\pi_{TS}) + \frac{1}{2\beta}$.
\end{theorem}
\begin{proof}
    Assume that $\pi_S$ is not optimal.
    Then there must exist a $\pi_X$ where $I_\data(\pi_{XS}) > \alpha$ and $C(\pi_{X}, N) = 0$ for which
    \begin{equation}
        \label{eq:theorem-p0}
        \left(C(\pi_{XS}, \gS)> 0 \right) \land
        \left(I_\data(\pi_X)  > I_\data(\pi_{XS}) - \frac{1}{\beta}\right).
    \end{equation}
    Then it follows:
    \begin{align}
        \label{eq:theorem-p1} C(\pi_{XS}, \gS) + \beta I_\data(\pi_X) & > C(\pi_X, N) + \beta I_\data(\pi_{XS}) - 1 \\
        \label{eq:theorem-p2} R_X                                     & > 2 \beta I_\data(\pi_{XS})  - 1            \\
        \label{eq:theorem-p3} R_X                                     & > R_T
    \end{align}
    where
    \cref{eq:theorem-p1} uses the fact that $C(\pi_{X}, N) = 0$,
    \cref{eq:theorem-p2} comes from adding $\beta\left(I_\data(\pi_{XS}) + I_\data(\pi_X) \right)$ to both sides, and
    \cref{eq:theorem-p3} comes from substituting in $I_\data(\pi_{XS}) > \alpha$ and applying \cref{lemma:realism-reward}.
    However, this shows that $\pi_T$ can improve by copying
    $\pi_X$, contradicting the equilibrium assumption. \qed
\end{proof}
Thus we see under the proposed asymmetric self-play objective,
a student in equilibrium with the teacher should solve
any reasonably realistic solution.

\subsection{Ensuring Fair-play}
\label{sec:fairplay}
While \cref{eq:alice-objective} encourages
teacher-solvable scenarios, the teacher has an unfair advantage as it can coordinate all actors.
For example, the teacher may try to identify student-controlled actors and
propose more difficult (and potentially unsolvable) scenarios only for the student.
This impedes the student's ability to learn and thus motivates additional restrictions on the teacher.

\paragraph{3-player formulation:} To address unfair coordination,
we can divide the teacher into
two sub-policies, adversary and demonstrator.
When $\pi_T$ is used to control all $N$ actors,
the adversary sub-policy controls actors in $\gT$ and the demonstrator sub-policy
controls actors in $\gS$.
Thus any coordination the demonstrator may try with the adversary
can in principle be learned by the student, as their architectures are now identical.

\paragraph{Replay actions:} Note that the teacher's reward in \cref{eq:alice-objective}
is a function of a pair of rollouts sampled from
$\pi_T, \pi_{TS}$ using
\emph{identical initial conditions} $\vs_1, \vm$.
We can replay states for actors in $\gT$ in one simulation from the pair.
Let $\bar{\va}_{\leq T}$ be actions sampled from $\pi_{TS}$.
Then when rolling out $\pi_{T}$, we instead use the modified policy
\begin{equation}
    \hat{\pi}_{T}(a_t^i | \vs_{\leq t}, \vm) = \begin{cases}
        \delta( a_t^i - \bar{a}^i_t)       & \textrm{if } i \in \gT \\
        \pi_{T}(a_t^i | \vs_{\leq t}, \vm) & \textrm{otherwise }
    \end{cases}
    \label{eq:replay}
\end{equation}
where $\delta$ is the Dirac-$\delta$ function.
This prevents the teacher from treating itself differently and enforces it to solve
the exact same scenario subjected to the student.
While the equation above is illustrative for when actors in $\pi_{T}$ is replayed,
during training, we randomly select whether
$\pi_T$ or $\pi_{TS}$ is replayed.

\subsection{Implementation}
\begin{figure}[t]
    \centering
    \centerline{\includegraphics[width=1.0\linewidth]{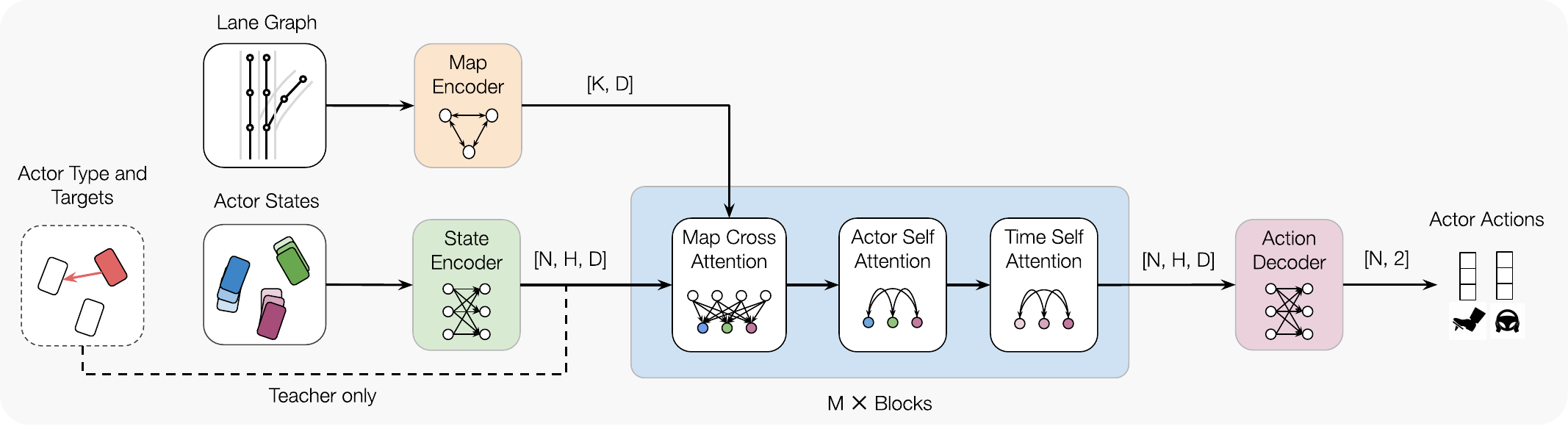}}
    \caption{\textbf{Policy Architecture}. We encode $K$ lane graph nodes and state history
        for $N$ actors over $H$ history timesteps into $D$-dimensional features.
        A transformer backbone with $M$ blocks uses factorized attention to extract features before
        decoding them into actor steering and acceleration. The teacher policy additionally encodes
        actor type (if an actor is in $\gT$) and target information; the student
        does not observe this information.
    }
    \label{fig:architecture}
\end{figure}

\paragraph{Neural Network Architecture:}
We implement our policy network with a viewpoint-invariant transformer~\cite{attention}.
Given a lane graph $ \vm $ with $ K $ nodes, we first use a viewpoint-invariant
map encoder~\cite{gorela} to extract a set of lane graph node features,
\begin{equation}
    \{\vf_k\}_{k=1}^K = \mathrm{MapEncoder}(\vm)
\end{equation}

For each actor $ i $, our state encoder uses a multi-layer perceptron (MLP) to extract features
for its past state $ s_{t-H}^{i}, \ldots, s_{t}^{i} $ over the past horizon $ H \geq 1 $,
\begin{equation}
    \vh_{t'}^{i} = \mathrm{StateEncoder}(\varphi_{t \rightarrow t'}^{i} \oplus [v_{t'}^{i}, \ell^{i}, w^{i}]), \quad t' = t-H+1, \ldots, t
\end{equation}
where $ \oplus $ is the concatenation operator, $v^i_{t'}, \ell^i, w^i$ is the actor's velocity, length, and width,
and $ \varphi_{t \rightarrow t}^{i} $ is the PairPose relative
positional features between the actor's position at the current time $ t $ and past time $ t' $;
\ie, $ g_{i \rightarrow j}^a $ in \cite[Eq. 1]{gorela}.
Each actor feature $ \vh_{t'}^{i} $ encodes the $ i $-th actor's state at $t'$
in its local coordinate frame at $t$, therefore preserving viewpoint-invariance.

Next, we use a stack of interleaving actor-to-map, actor-to-actor, and actor-to-time
transformer layers~\cite{scenetransformer} to efficiently model actor and lane graph interactions.
Our actor-to-time layer uses standard self-attention,
with sinusoidal positional encoding to break the symmetry across time.
To model actor-to-actor interactions in a viewpoint-invariant manner, we extend
standard self-attention to use relative positional encodings between actors~\cite{zhou2022hivt,zhong2023ctgplusplus}.
For the $ i $-th actor at time $ t' $, we compute attention with
key $ \vk_i $, queries $ \{\vq_{i, j}\}_{j=1}^N $, and values $ \{\vv_{i, j}\}_{j=1}^N $,
\begin{align}
    \vk_i = \vh_{t'}^{i}, \quad
    \vq_{i, j} = \vv_{i, j} = \vh_{t'}^{j} + \mathrm{MLP}(\varphi_{t'}^{i \rightarrow j})
\end{align}
where $ \varphi_{t'}^{i \rightarrow j} $ is the PairPose features between actors $ i $ and $ j $ at time $ t' $.

We use the same attention mechanism in our actor-to-map layer
with two modifications for efficiency:
\emph{1)} we use actor-to-map only for the current time $t$
and
\emph{2)} we limit its queries and values to the actor's $ k $ nearest lane graph nodes.

Finally, our action decoder uses an MLP to deterministically predict each actor's steering and acceleration
from its features $ \vh_t^i $ at the current time $t$ after $ M $ blocks of transformer layers.
\begin{equation}
    \va_t^i = \mathrm{ActionDecoder}(\vh_t^i)
\end{equation}
The policy can then be unrolled in the environment in a sliding window fashion.

\paragraph{Optimization:}
We describe how to optimize \cref{eq:alice-objective,eq:bob-objective}
in practice.
During training, we randomly assign agents into $\gT$.
For ease of optimization, we
\emph{1)} relax the discrete indicator
function $c_i(\vs)$ to a differentiable collision loss,
\emph{2)} assign a specific target actor for each actor in $\gT$ for which the collision loss is active,\footnote{
    Always targeting the closest actor showed similar results, but the ability to
    target a specific actor is useful in the zero-shot setting (to target the external policy).
}
and
\emph{3)} apply an additional distance loss to encourage each adversarial actor towards its target.
To encode the information that actor $i$
targets actor $j$, we have
\begin{equation}
    \vh_t^i \leftarrow \vh_t^i + \mathrm{MLP}(\ve \oplus \varphi_{t}^{i \rightarrow j} )
\end{equation}
where $\ve$ is a learnable embedding to indicate the actor is in $\gT$
and the PairPose features provide positional information on the target.
In the 3-player formulation,
only the adversarial sub-policy has access to this information.
Finally, as our relaxed reward is differentiable, we can use
backpropagation through time to directly optimize the learning objective.

\section{Experiments}

\subsection{Realistic Traffic Simulation}
\label{sec:exp-behaviors}

\paragraph{Datasets:}
We use three different datasets to evaluate
our model's performance.
\textsc{Argoverse2} Motion~\cite{argoverse2} is
a collection of 250k urban scenarios curated for challenging
multiagent-interactions.
Agents are given 5s of history before unrolling for 6s.
Our policy observes all actors but only controls
focal and scored agents while the remaining actors are
replayed due to noisy or incomplete annotations.

Next, \textsc{Highway} is a collection of
over 1000 highway logs
collected over various locations
including on-ramps, off-ramps, forks, merges, and curved roads.
Agents are given 3s of history before unrolling for 10s.
As \textsc{Highway} consists of high-quality human labels,
all actors are controlled.

Finally, \textsc{Safety} is a collection
of over 100 hand-designed safety-critical highway scenarios
with %
various edge cases including aggressive actor cut-ins,
lead actor hard-braking, actors stopped on shoulder, etc.
These scenarios are simulated and involve actors that are scripted
to induce safety-critical interactions while the actor policy controls
the ego actor that is meant to be tested.
As \textsc{Safety} scenarios are simulated and interactive to the policy being evaluated, no ground truth
human demonstrations are available.
We use \textsc{Safety} to evaluate models trained on \textsc{Highway} without any fine-tuning,
measuring their out-of-distribution generalization to highly-interactive, safety-critical scenarios.

\begin{figure}[t]
    \centering
    \includegraphics*[width=\linewidth]{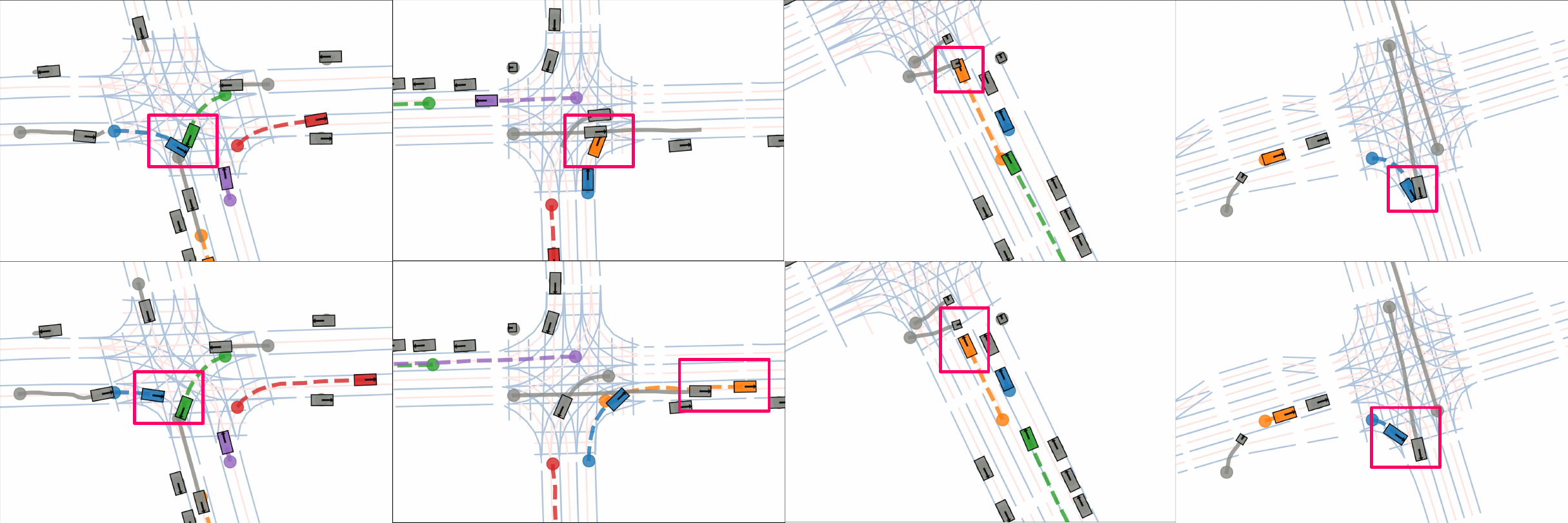}
    \caption{\textbf{Qualitative Comparison}. We show TrafficSim (\textbf{top}) and Ours (\textbf{bottom})
        on \textsc{Argoverse2}. Our method learns better interaction reasoning to avoid collisions
        realistically. 
        Colored actors are controlled; gray actors are replayed.
        }
    \label{fig:qualitative}
\end{figure}

\paragraph{Traffic Modelling Metrics:}
We use a suite of metrics to evaluate the realism of traffic simulation agents.
Final displacement error (\textbf{FDE}) measures
the L2 error between the agent’s simulated future
and ground truth (GT) position at the end of the rollout.
\textbf{Collision} percent is used to evaluate actors' interaction reasoning, and
\textbf{Offroad} percent evaluates actors' map understanding.
We also measure the distributional similarity of various actor features.
This is done by fitting histograms to agents' linear speed,
linear acceleration, angular speed, distance to road boundary, and distance
to the closest actor, before taking the Jensen-Shannon divergence (\textbf{JSD}) to the
GT statistics. Following~\cite{wosac}, GT statistics are computed
for each actor separately, with time being considered independent.
These are then averaged to form our composite JSD metric.

\paragraph{Baselines:}
We compare our approach against the current state-of-the-art for traffic simulation.
\textbf{Closed-loop IL} is our supervised learning baseline that is trained to
regress expert states using closed-loop policy unrolling~\cite{trafficsim}.
\textbf{TrafficSim}~\cite{trafficsim}
further incorporates prior knowledge to
closed-loop IL using a differentiable collision loss.
For our standard symmetric self-play baseline,
we adapt the multiagent RL (MARL) approach in \textbf{SMARTS}~\cite{smarts}
to our setting by applying a factorized PPO loss~\cite{rtr}
to the multiagent policy to optimize a hand-designed reward.
\textbf{Emb. Syn.}~\cite{bronstein2022embedding} is a curation-based approach which
sub-samples the dataset using a learned difficulty classifier.
As \cite{bronstein2022embedding}
uses an extremely large internal dataset containing a 14k hours of driving,
to adapt their approach to
the datasets used in this work, we \emph{1)} directly select the snippets where the baseline
IL model fails in rather than training a difficulty classifier and \emph{2)}
finetune the baseline IL model on the selected snippets instead of training from scratch.
\textbf{KING}~\cite{king} is a gradient-based adversarial approach where the adversarial objective
is backpropagated through bicycle model dynamics. We adapt \cite{king} to
generate adversarial training examples with the same realism regularization term as ours
(stay close to the logged trajectory)
for training the base traffic policy.
All baselines are adapted to use the same input/output representation, model architecture,
and environment dynamics. More details can be found in the supplementary.

\begin{table}[t]
    \setlength{\tabcolsep}{3pt}
    \centering
    \scriptsize
    \begin{tabular}{l|c|cccc|cccc}
        \toprule
                                                                  & \textsc{Safety}
                                                                  & \multicolumn{4}{c|}{\textsc{Highway}}
                                                                  & \multicolumn{4}{c}{\textsc{Argoverse2}}                                                                                                                                                           \\
        Model                                                     & Col.                                    & FDE              & Col.             & Offroad          & JSD               & FDE              & Col.             & Offroad          & JSD               \\
        \cmidrule(r){1-1}
        \cmidrule(lr){2-2}
        \cmidrule(lr){3-6}
        \cmidrule(l){7-10}
        Closed-loop \emph{(IL)}~\cite{trafficsim}                 & 40.41                                   & \textbf{5.70}    & 1.88             & 1.43             & \textbf{0.460}    & \textbf{4.95}    & 1.02             & \textbf{3.14}    & \underline{0.436} \\
        TrafficSim \emph{(IL+Prior)}~\cite{trafficsim}            & 26.69                                   & 5.83             & \underline{0.37} & \textbf{1.39}    & 0.466             & 5.13             & \underline{0.33} & 3.36             & 0.436             \\
        SMARTS \emph{(MARL)}~\cite{smarts}                        & 13.65                                   & 20.2             & 0.99             & 2.97             & 0.501             & 16.3             & 8.12             & 17.2             & 0.528             \\
        Emb. Syn. \emph{(Curation)}~\cite{bronstein2022embedding} & 27.75                                   & 6.46             & 4.34             & 1.67             & 0.490             & 6.89             & 2.02             & 4.30             & 0.449             \\
        KING \emph{(Adversarial)} ~\cite{king}                    & \underline{12.65}                       & 5.80             & 1.42             & 1.59             & 0.475             & 6.33             & 1.16             & \underline{3.29} & 0.465             \\
        \rowcolor{blue!10}    Ours                                & \textbf{8.16}                           & \underline{5.76} & \textbf{0.00}    & \underline{1.40} & \underline{0.462} & \underline{5.04} & \textbf{0.24}    & 3.39             & \textbf{0.433}    \\ \bottomrule
    \end{tabular}
    \caption{\textbf{Traffic Simulation Results}. On \textsc{Safety}, \textsc{Highway}, and \textsc{Argoverse2},
        our approach obtains the best collision rates without sacrificing other realism metrics.
    }
    \label{tab:traffic}
\end{table}

\paragraph{Results:}
Recall that models trained on \textsc{Highway} are evaluated on \textsc{Safety}
without fine-tuning.
\cref{tab:traffic} shows that the IL baseline consistently achieves the best reconstruction
metrics but struggles with interaction reasoning, resulting in higher collision rates.
By adding in prior knowledge using the differentiable collision loss, TrafficSim
can reduce the collision rate with some trade-off in other realism metrics.
MARL struggles the most as it is difficult to
capture realistic human-like driving with a handcrafted reward alone.
Curation is ineffective at our dataset scale, even for \textsc{Argoverse2} which is
among the largest publicly available datasets. This is potentially because
\textsc{Argoverse2} is already curated.
KING reduces collision rate on
\textsc{Safety} but still struggles with
nominal collisions. This could be due to the fact that
the realism of the adversarial trajectories is lacking, lowering their transferability.
Our approach consistently achieves the best overall realism, achieving
the lowest collision rates with minimal sacrifice in other metrics, and
generalizes the best to the \textsc{Safety} set.

\subsection{Zero-shot Scenario Generation for Learnable Autonomy}
\label{sec:exp-autonomy}

In \cref{sec:exp-behaviors}, we have shown that
after self-play training,
the teacher has helped the student learn a more realistic and
robust policy for multiagent traffic simulation.
We now evaluate the teacher's ability to zero-shot transfer to
generate scenarios for \emph{new unseen} policies.
The ability for zero-shot transfer not only shows that the teacher policy
has learned \emph{generally applicable} training scenarios but also provides
an efficient way to improve more expensive policies.
Traffic simulation agents use low dimensional (bicycle model) states as input, so
they can be efficiently trained at scale with lightweight
and efficient simulation.
Agents can then be deployed to interact with
end-to-end autonomy policies that require additional more expensive high-fidelity sensor simulation.
This allows us to generate training scenarios for the autonomy policy
by simply deploying our teacher policy to target the external policy,
without needing to retrain in the more expensive simulation setting.

\begin{table}[t]
    \setlength{\tabcolsep}{2.5pt}
    \centering
    \scriptsize
    \begin{tabular}{llc|cccccc|ccccc}
        \toprule
                                           &
                                           &
                                           & \multicolumn{6}{c|}{\textsc{Safety}}
                                           & \multicolumn{5}{c}{\textsc{Highway}}                                                                                                                                                                                      \\
        Autonomy                           & Train Data                           & Priv

                                           & \makecell{GSR                                                                                                                                                                                                             \\($\uparrow$)}
                                           & \makecell{Col                                                                                                                                                                                                             \\($\downarrow$)}
                                           & \makecell{mTTC                                                                                                                                                                                                            \\($\mathrm{\Delta}$)}
                                           & \makecell{Prog                                                                                                                                                                                                            \\($\mathrm{\Delta}$)}
                                           & \makecell{P2E                                                                                                                                                                                                             \\($\mathrm{\Delta}$)}
                                           & \makecell{Accel                                                                                                                                                                                                           \\($\mathrm{\Delta}$)}
                                           & \makecell{Col                                                                                                                                                                                                             \\($\downarrow$)}
                                           & \makecell{mTTC                                                                                                                                                                                                            \\($\mathrm{\Delta}$)}
                                           & \makecell{Prog                                                                                                                                                                                                            \\($\mathrm{\Delta}$)}
                                           & \makecell{P2E                                                                                                                                                                                                             \\($\mathrm{\Delta}$)}
                                           & \makecell{Accel                                                                                                                                                                                                           \\($\mathrm{\Delta}$)} \\
        \cmidrule(r){1-3}
        \cmidrule(lr){4-9}
        \cmidrule(l){10-14}
        \rowcolor{gray!10} \textsc{Expert} &                                      & \cmark & 90.6          & 0.0          & 5.82          & 232          & 0.17          & 0.85          & \textbf{0.0} & 4.15          & 483          & 0.27          & 0.25          \\
        \cmidrule(r){1-3}
        \cmidrule(lr){4-9}
        \cmidrule(l){10-14}
        \rowcolor{gray!10} \multirow{5}{*}{
        \makecell[l]{Object-                                                                                                                                                                                                                                           \\
        based}}                            & \textsc{Safety}                      & \cmark & 80.1          & 0.0          & 5.83          & 236          & 0.35          & 0.91          & \textbf{0.0} & 4.28          & 487          & 0.05          & 0.14          \\
                                           & \textsc{Highway}                     &        & 40.2          & 58.3         & 3.33          & 280          & 1.01          & 1.41          & \textbf{0.0} & \textbf{4.16} & 498          & 0.02          & 0.14          \\
                                           & IL~\cite{trafficsim}                 &        & 45.6          & 59.7         & 3.61          & 277          & 0.90          & 1.39          & \textbf{0.0} & 4.17          & 498          & 0.02          & 0.11          \\
                                           & Adv.~\cite{king}                     &        & 83.1          & 6.2          & 5.54          & 253          & 0.45          & 0.99          & \textbf{0.0} & 4.20          & 500          & 0.03          & 0.12          \\
        \rowcolor{blue!10}                 & Ours                                 &        & \textbf{92.6} & \textbf{0.0} & \textbf{5.77} & \textbf{247} & \textbf{0.36} & \textbf{0.88} & \textbf{0.0} & 4.29          & \textbf{482} & \textbf{0.09} & \textbf{0.18} \\
        \cmidrule(r){1-3}
        \cmidrule(lr){4-9}
        \cmidrule(l){10-14}
        \rowcolor{gray!10} \multirow{5}{*}{
        \makecell[l]{Object-                                                                                                                                                                                                                                           \\
        free}}                             & \textsc{Safety}                      & \cmark & 64.2          & 0.0          & 6.14          & 170          & 0.43          & 1.19          & \textbf{0.0} & 4.78          & 297          & 0.80          & 1.08          \\
                                           & \textsc{Highway}                     &        & 31.2          & 52.3         & 3.08          & \textbf{267} & 1.15          & 1.28          & \textbf{0.0} & \textbf{4.56} & \textbf{460} & 0.48          & 0.36          \\
                                           & IL~\cite{trafficsim}                 &        & 35.8          & 52.3         & 2.99          & 270          & 1.12          & 1.27          & \textbf{0.0} & 4.62          & 462          & 0.45          & 0.35          \\
                                           & Adv.~\cite{king}                     &        & 38.7          & 50.9         & 2.96          & 273          & 1.06          & 1.26          & \textbf{0.0} & 4.46          & 467          & \textbf{0.40} & \textbf{0.32} \\
        \rowcolor{blue!10}                 & Ours                                 &        & \textbf{64.2} & \textbf{0.0} & \textbf{6.15} & 169          & \textbf{0.52} & \textbf{1.23} & \textbf{0.0} & 4.67          & 300          & 0.75          & 1.02          \\ \bottomrule
    \end{tabular}
    \caption{\textbf{End-to-end autonomy results} on \textsc{Safety} and \textsc{Highway}.
        $(\uparrow / \downarrow)$ denotes higher/lower is better, $(\mathrm{\Delta})$ denotes closer to expert is better.
        Among the unprivileged methods, we obtain the best overall performance, with emphasis on \textsc{Safety}.}
    \label{tab:autonomy}
\end{table}

\paragraph{Learnable Autonomy Systems:}
To evaluate the generalizability of our approach, we consider
training two distinct autonomy paradigms
on datasets generated using our approach versus various baselines.
Our \textbf{object-based} autonomy estimates actor locations
with a discrete set of bounding boxes and trajectories
using a joint perception and prediction backbone~\cite{faf,pnpnet,detra}.
Our \textbf{object-free} autonomy estimates actor locations
with continuous occupancy probabilities across the scene~\cite{occflow,agro2023implicito}
to be used for motion planning~\cite{hoermann2018dynamic,casas2021mp3,biswas2024quad}.
Both approaches sample trajectories in Frenet frame before
costing each trajectory and selecting the min-cost trajectory~\cite{plt}.
Costs are computed as a linear combination of several trajectory features,
where weights are learned using max margin~\cite{plt,nmp}.
Expert demonstrations are generated using an oracle planner
with privileged access to ground truth actor states and future plans.
As both autonomy approaches use LiDAR input,
LidarSim~\cite{lidarsim} is used for training-dataset generation and evaluation in
closed-loop simulation.
More details can be found in the supplementary.

\paragraph{Autonomy Evaluation:}
We evaluate an autonomy's nominal driving with
\textsc{Highway} in reactive log replay\footnote{Actors are constrained to their original path,
  with a heuristic policy controlling their acceleration so that actors can react to
  the ego vehicle during closed-loop simulation.}, and safety-critical performance
with \textsc{Safety} (both datasets described in \cref{sec:exp-behaviors}).
For our primary system performance and safety metrics,
\textbf{Goal Success Rate (GSR)} measures if the ego reaches its goal without violating traffic rules or colliding, and
\textbf{Collision (Col)} measures collisions with the ego vehicle.
We use secondary metrics to measure other aspects of driving quality.
\textbf{Minimum Time-To-Collision (mTTC)} is computed between the ego vehicle
and other actors assuming constant velocity and acceleration.
\textbf{Progress (Prog)} is the distance traveled over the scene.
\textbf{Plan to Execution (P2E)} is the deviation between the ego plan and its executed trajectory, measuring a notion of planning consistency.
\textbf{Acceleration (Accel)} is the average of the longitudinal and lateral acceleration, measuring discomfort.
Primary metrics have a clear direction where higher/lower is better.
Secondary metrics are less clear (\eg progress should be high but not compromise safety/speed-limit, P2E
should be low in general but high when encountering unexpected behaviors).
Thus secondary metrics are better if they are closer to the expert.

\paragraph{Baselines:}
Our first baseline is using \textsc{Highway}
in reactive log replay.
Next, we use \textbf{Closed-loop IL} and \textbf{Adversarial} (\cref{sec:exp-behaviors})
to generate datasets.
Finally, we report two privileged approaches:
\emph{1)} the performance of the expert autonomy and
\emph{2)} the performance of training directly on the \textsc{Safety} test set.

\paragraph{Results:}
\cref{tab:autonomy}
shows that nominal driving (\textsc{Highway}, IL)
does not contain enough exposure to edge cases
for autonomy to generalize to the \textsc{Safety} set.
Adversarial generation
improves performance but is still lacking.
We posit that the per-scenario optimization process reaches local optima
that our approach has learned to avoid over the course of training.
Similarly, our model also learns more general notions of realism,
compared to the per-scenario objective
of staying close to the logged trajectory.
These factors are particularly pronounced for our object-free autonomy, which
relies on more difficult scenarios during training but results in more conservative driving.
Thus, we
achieve high-quality driving performance for both \textsc{Safety} and \textsc{Highway} evaluation,
closely matching the performance of the privileged approaches across both autonomy paradigms.

\subsection{Ablation and Analysis}
In this section, we ablate various aspects of our asymmetric self-play
learning objective and model architecture using the traffic simulation setting as a test bed.
We also provide additional analysis of the training dynamics of our approach.

\paragraph{Ablation:}

\begin{table}[t]
       \begin{minipage}{0.5\linewidth}
              \scriptsize
              \setlength{\tabcolsep}{3pt}
              \centering
              \begin{tabular}{cccccc}
                     \toprule
                                              &
                                              & \multicolumn{3}{c}{\textsc{Highway}}
                                              & \textsc{Safety}                                                                                        \\

                     \makecell{Solv.                                                                                                                   \\Obj.} &
                     \makecell{Realism                                                                                                                 \\Obj.} & FDE          & Col.         &JSD                        & Col.                   \\
                     \cmidrule(r){1-2}
                     \cmidrule(lr){3-5}
                     \cmidrule(l){6-6}
                                              &                                      & 7.12          & 2.48          & 0.529          & 16.37          \\
                     \cmark                   &                                      & 8.16          & 2.33          & 0.536          & 18.62          \\
                                              & \cmark                               & 6.75          & 1.75          & 0.513          & \textbf{2.14 } \\
                     \rowcolor{blue!10}\cmark & \cmark                               & \textbf{5.79} & \textbf{0.00} & \textbf{0.464} & 8.78           \\\bottomrule
              \end{tabular}
              \caption{Teacher loss design.}
              \label{tab:teacher_loss}
       \end{minipage}
       \hfill
       \begin{minipage}{0.5\linewidth}
              \scriptsize
              \setlength{\tabcolsep}{3pt}
              \centering
              \begin{tabular}{cccccc}
                     \toprule
                                              &        & \multicolumn{3}{c}{\textsc{Highway}} & \textsc{Safety}                                 \\

                     \makecell{3-player                                                                                                         \\ Game} & \makecell{Replay  \\Actions} & FDE             & Col.            &JSD                     & Col.                   \\
                     \cmidrule(r){1-2} \cmidrule(lr){3-5} \cmidrule(l){6-6}
                                              &        & 5.90                                 & 0.29            & 0.478          & \textbf{1.3} \\
                     \cmark                   &        & 6.00                                 & 0.09            & 0.474          & 12.4         \\
                                              & \cmark & 6.02                                 & 0.07            & \textbf{0.457} & 12.4         \\
                     \rowcolor{blue!10}\cmark & \cmark & \textbf{5.79}                        & \textbf{0.00}   & 0.464          & 8.78         \\\bottomrule
              \end{tabular}
              \caption{Teacher architecture design.}
              \label{tab:teacher_architecture}
       \end{minipage}
\end{table}

First we ask, \emph{how important is it for challenging scenarios to be solvable and realistic?}
We ablate the solvability and realism terms in the teacher objective in \cref{eq:alice-objective};
\cref{tab:teacher_loss} shows that both are necessary for the student to learn
realistic and robust behavior. Without solvability,
the teacher generates extremely difficult scenarios, resulting in an
overly cautious student which avoids collisions on \textsc{Safety} but drives poorly in nominal scenarios,
exhibiting unnecessary and extreme evasive maneuvers.
Without any realism, scenarios become so extreme that they no longer
even transfer to \textsc{Safety}.

Next we ask, \emph{how effective are the fair-play architectural design choices presented in \cref{sec:fairplay}?}
\cref{tab:teacher_architecture} shows that
combining the 3-player and replay approach
results in the best overall performance. Using neither of the two
achieves a very low \textsc{Safety} collision rate at the cost of
greatly increasing nominal collisions.
This is because the teacher overestimates the solvability of a scenario, leading
to similar outcomes as when the solvability loss term is omitted.

\paragraph{Adversarial Success vs. Student Performance:}
\begin{figure}[t]
    \centering
    \includegraphics[width=\textwidth]{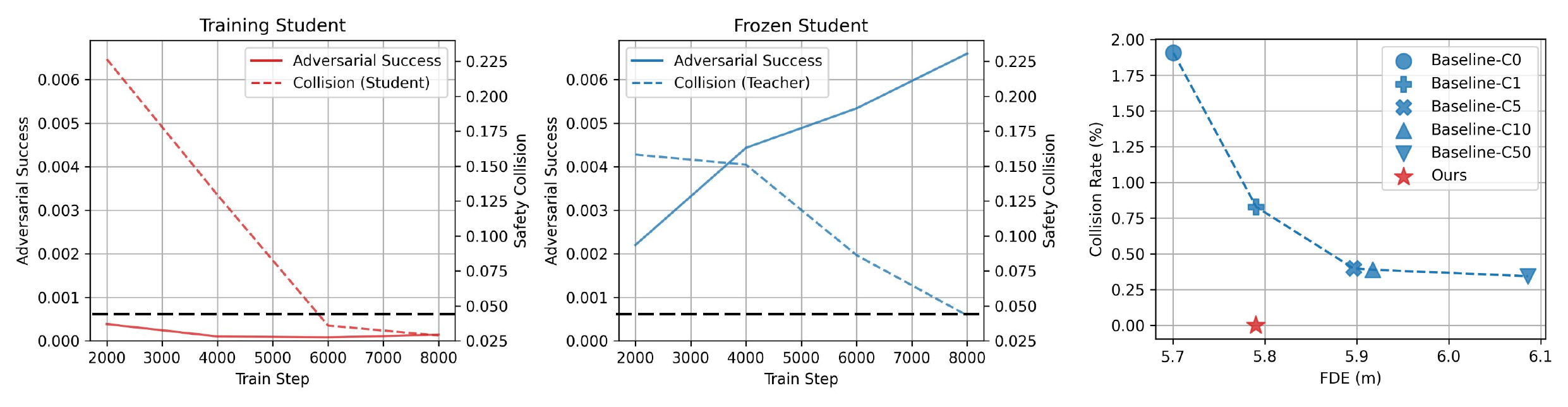}
    \caption{\textbf{(Left)}:
        When the student is training, adversarial success plateaus but
        the student continually improves.
        \textbf{(Center):}
        When the student is frozen, adversarial success improves along with
        teacher performance.
        \textbf{(Right)}: Our approach dominates the Pareto frontier obtained from
        naively increasing collision loss weight.}
    \label{fig:analysis}
\end{figure}

We wish to analyze the correlation between adversarial success,
(the teacher's ability to find solvable
scenarios that the student fails)
and the performance of the student.
\cref{fig:analysis} (left) shows the teacher's return
(minus realism) and the student's performance on \textsc{Safety}.
Despite the teacher's return staying flat, the
student continually improves. Because the student trains with the teacher,
it is difficult for the teacher to consistently outperform the student to improve its objective.
\cref{fig:analysis} (right) shows the teacher's performance when the student is frozen.
In this case teacher can continually increase its return by exploiting scenarios the frozen student fails.
However, there is less incentive for the teacher
to increase the difficulty of the training scenarios, resulting
in the teacher having worse performance compared to a continually improving student.

\paragraph{Pareto Frontier:}
We show in \cref{fig:analysis} (right)
that the improvements of our approach cannot be obtained by
increasing the weight on the differentiable collision loss in TrafficSim.
Our results suggest that difficult scenarios are more useful for
learning robust policies while maintaining performance on nominal driving.

\section{Conclusion and Limitations}

We have presented an asymmetric self-play approach for learning to drive, 
where solvable and realistic scenarios naturally emerge 
from the interactions of a teacher and student policy.
We have shown that the resulting student policy can power more realistic and robust traffic simulation agents
across several datasets,
and the teacher policy can zero-shot generalize to generating scenarios for unseen end-to-end autonomy policies
without needing expensive retraining.
While the results are promising, we recognize some existing limitations.
Firstly, the specific \emph{type} of scenarios the teacher finds is not controllable; 
incorporating advances in controllable traffic simulation or exploring alternative reward 
designs and training schemes to encourage diversity can be interesting directions to explore. 
Exploring alternative failure modes
besides collision (\eg off-road, unrealistic behaviors, perception failures) 
is another promising avenue for future work.

\clearpage
\section*{Acknowledgements}
The authors would like to thank Wenyuan Zeng, Simon Suo, and Thomas Gilles for their helpful discussions.
The authors would also like to thank the anonymous reviewers for their helpful comments and suggestions to improve the paper.
\bibliographystyle{splncs04}
\bibliography{main}

\begin{thebibliography}{10}
\providecommand{\url}[1]{\texttt{#1}}
\providecommand{\urlprefix}{URL }
\providecommand{\doi}[1]{https://doi.org/#1}

\bibitem{abeysirigoonawardena2019generating}
Abeysirigoonawardena, Y., Shkurti, F., Dudek, G.: Generating adversarial driving scenarios in high-fidelity simulators. In: 2019 International Conference on Robotics and Automation (ICRA). pp. 8271--8277. IEEE (2019)

\bibitem{agro2023implicito}
Agro, B., Sykora, Q., Casas, S., Urtasun, R.: Implicit occupancy flow fields for perception and prediction in self-driving. In: CVPR (2023)

\bibitem{bakhtin2023diplomacy}
Bakhtin, A., Wu, D.J., Lerer, A., Gray, J., Jacob, A.P., Farina, G., Miller, A.H., Brown, N.: Mastering the game of no-press diplomacy via human-regularized reinforcement learning and planning. In: {ICLR} (2023)

\bibitem{chauffernet}
Bansal, M., Krizhevsky, A., Ogale, A.: Chauffeurnet: Learning to drive by imitating the best and synthesizing the worst. arXiv preprint arXiv:1812.03079  (2018)

\bibitem{simnet}
Bergamini, L., Ye, Y., Scheel, O., Chen, L., Hu, C., Del~Pero, L., Osi{\'n}ski, B., Grimmett, H., Ondruska, P.: Simnet: Learning reactive self-driving simulations from real-world observations. In: 2021 IEEE International Conference on Robotics and Automation (ICRA). pp. 5119--5125. IEEE (2021)

\bibitem{openaifive}
Berner, C., Brockman, G., Chan, B., Cheung, V., Debiak, P., Dennison, C., Farhi, D., Fischer, Q., Hashme, S., Hesse, C., J{\'{o}}zefowicz, R., Gray, S., Olsson, C., Pachocki, J., Petrov, M., de~Oliveira~Pinto, H.P., Raiman, J., Salimans, T., Schlatter, J., Schneider, J., Sidor, S., Sutskever, I., Tang, J., Wolski, F., Zhang, S.: Dota 2 with large scale deep reinforcement learning. CoRR  (2019)

\bibitem{bark}
Bernhard, J., Esterle, K., Hart, P., Kessler, T.: Bark: Open behavior benchmarking in multi-agent environments. In: 2020 IEEE/RSJ International Conference on Intelligent Robots and Systems (IROS). pp. 6201--6208. IEEE (2020)

\bibitem{bhattacharyya2018multi}
Bhattacharyya, R.P., Phillips, D.J., Wulfe, B., Morton, J., Kuefler, A., Kochenderfer, M.J.: Multi-agent imitation learning for driving simulation. In: 2018 IEEE/RSJ International Conference on Intelligent Robots and Systems (IROS). pp. 1534--1539. IEEE (2018)

\bibitem{biswas2024quad}
Biswas, S., Casas, S., Sykora, Q., Agro, B., Sadat, A., Urtasun, R.: Quad: Query-based interpretable neural motion planning for autonomous driving. In: ICRA (2024)

\bibitem{bojarski2016end}
Bojarski, M., Del~Testa, D., Dworakowski, D., Firner, B., Flepp, B., Goyal, P., Jackel, L.D., Monfort, M., Muller, U., Zhang, J., et~al.: End to end learning for self-driving cars. arXiv preprint arXiv:1604.07316  (2016)

\bibitem{bronstein2022embedding}
Bronstein, E., Srinivasan, S., Paul, S., Sinha, A., O'Kelly, M., Nikdel, P., Whiteson, S.: Embedding synthetic off-policy experience for autonomous driving via zero-shot curricula. arXiv preprint arXiv:2212.01375  (2022)

\bibitem{cao2023reinforcement}
Cao, Y., Ivanovic, B., Xiao, C., Pavone, M.: Reinforcement learning with human feedback for realistic traffic simulation. arXiv preprint arXiv:2309.00709  (2023)

\bibitem{cao2023robust}
Cao, Y., Xu, D., Weng, X., Mao, Z., Anandkumar, A., Xiao, C., Pavone, M.: Robust trajectory prediction against adversarial attacks. In: Conference on Robot Learning. pp. 128--137. PMLR (2023)

\bibitem{detra}
Casas, S., Agro, B., Mao, J., Gilles, T., Cui, A., Li, T., Urtasun, R.: Detra: A unified model for object detection and trajectory forecasting. arXiv preprint  (2024)

\bibitem{casas2021mp3}
Casas, S., Sadat, A., Urtasun, R.: {MP3:} {A} unified model to map, perceive, predict and plan. In: {CVPR} (2021)

\bibitem{chang2023controllable}
Chang, W.J., Pittaluga, F., Tomizuka, M., Zhan, W., Chandraker, M.: Controllable safety-critical closed-loop traffic simulation via guided diffusion. arXiv preprint arXiv:2401.00391  (2023)

\bibitem{chen2021adversarial}
Chen, B., Chen, X., Wu, Q., Li, L.: Adversarial evaluation of autonomous vehicles in lane-change scenarios. IEEE transactions on intelligent transportation systems  \textbf{23}(8),  10333--10342 (2021)

\bibitem{codevilla2018end}
Codevilla, F., M{\"u}ller, M., L{\'o}pez, A., Koltun, V., Dosovitskiy, A.: End-to-end driving via conditional imitation learning. In: 2018 IEEE international conference on robotics and automation (ICRA). pp. 4693--4700. IEEE (2018)

\bibitem{corso2019adaptive}
Corso, A., Du, P., Driggs-Campbell, K., Kochenderfer, M.J.: Adaptive stress testing with reward augmentation for autonomous vehicle validatio. In: 2019 IEEE Intelligent Transportation Systems Conference (ITSC). pp. 163--168. IEEE (2019)

\bibitem{gorela}
Cui, A., Casas, S., Wong, K., Suo, S., Urtasun, R.: Gorela: Go relative for viewpoint-invariant motion forecasting. arXiv preprint arXiv:2211.02545  (2022)

\bibitem{ding2023realgen}
Ding, W., Cao, Y., Zhao, D., Xiao, C., Pavone, M.: Realgen: Retrieval augmented generation for controllable traffic scenarios. arXiv preprint arXiv:2312.13303  (2023)

\bibitem{ding2020learning}
Ding, W., Chen, B., Xu, M., Zhao, D.: Learning to collide: An adaptive safety-critical scenarios generating method. In: 2020 IEEE/RSJ International Conference on Intelligent Robots and Systems (IROS). pp. 2243--2250. IEEE (2020)

\bibitem{womd2021}
Ettinger, S., Cheng, S., Caine, B., Liu, C., Zhao, H., Pradhan, S., Chai, Y., Sapp, B., Qi, C.R., Zhou, Y., Yang, Z., Chouard, A., Sun, P., Ngiam, J., Vasudevan, V., McCauley, A., Shlens, J., Anguelov, D.: Large scale interactive motion forecasting for autonomous driving : The waymo open motion dataset. In: ICCV (2021)

\bibitem{scenic}
Fremont, D.J., Kim, E., Dreossi, T., Ghosh, S., Yue, X., Sangiovanni{-}Vincentelli, A.L., Seshia, S.A.: Scenic: a language for scenario specification and data generation. Mach. Learn.  (2023)

\bibitem{gao2023asymmetric}
Gao, Y., Chen, J., Chen, X., Wang, C., Hu, J., Deng, F., Lam, T.L.: Asymmetric self-play-enabled intelligent heterogeneous multirobot catching system using deep multiagent reinforcement learning. IEEE Transactions on Robotics  \textbf{39}(4),  2603--2622 (2023)

\bibitem{ghodsi2021generating}
Ghodsi, Z., Hari, S.K.S., Frosio, I., Tsai, T., Troccoli, A., Keckler, S.W., Garg, S., Anandkumar, A.: Generating and characterizing scenarios for safety testing of autonomous vehicles. In: 2021 IEEE Intelligent Vehicles Symposium (IV). pp. 157--164. IEEE (2021)

\bibitem{waymax}
Gulino, C., Fu, J., Luo, W., Tucker, G., Bronstein, E., Lu, Y., Harb, J., Pan, X., Wang, Y., Chen, X., Co{-}Reyes, J.D., Agarwal, R., Roelofs, R., Lu, Y., Montali, N., Mougin, P., Yang, Z., White, B., Faust, A., McAllister, R., Anguelov, D., Sapp, B.: Waymax: An accelerated, data-driven simulator for large-scale autonomous driving research. In: NeurIPS (2023)

\bibitem{king}
Hanselmann, N., Renz, K., Chitta, K., Bhattacharyya, A., Geiger, A.: King: Generating safety-critical driving scenarios for robust imitation via kinematics gradients. In: European Conference on Computer Vision. pp. 335--352. Springer (2022)

\bibitem{zooxscaling}
Harmel, M., Paras, A., Pasternak, A., Linscott, G.: Scaling is all you need: Training strong policies for autonomous driving with jax-accelerated reinforcement learning. arXiv preprint arXiv:2312.15122  (2023)

\bibitem{resnet}
He, K., Zhang, X., Ren, S., Sun, J.: Deep residual learning for image recognition. In: Proceedings of the IEEE conference on computer vision and pattern recognition. pp. 770--778 (2016)

\bibitem{henaff2019model}
Henaff, M., Canziani, A., LeCun, Y.: Model-predictive policy learning with uncertainty regularization for driving in dense traffic. arXiv preprint arXiv:1901.02705  (2019)

\bibitem{hoermann2018dynamic}
Hoermann, S., Bach, M., Dietmayer, K.: Dynamic occupancy grid prediction for urban autonomous driving: A deep learning approach with fully automatic labeling. In: 2018 IEEE International Conference on Robotics and Automation (ICRA). pp. 2056--2063. IEEE (2018)

\bibitem{symphony}
Igl, M., Kim, D., Kuefler, A., Mougin, P., Shah, P., Shiarlis, K., Anguelov, D., Palatucci, M., White, B., Whiteson, S.: Symphony: Learning realistic and diverse agents for autonomous driving simulation (2022). \doi{10.48550/ARXIV.2205.03195}

\bibitem{wayve2018rl}
Kendall, A., Hawke, J., Janz, D., Mazur, P., Reda, D., Allen, J., Lam, V., Bewley, A., Shah, A.: Learning to drive in a day. CoRR  (2018)

\bibitem{klischat2019generating}
Klischat, M., Althoff, M.: Generating critical test scenarios for automated vehicles with evolutionary algorithms. In: 2019 IEEE Intelligent Vehicles Symposium (IV). pp. 2352--2358. IEEE (2019)

\bibitem{koren2018adaptive}
Koren, M., Alsaif, S., Lee, R., Kochenderfer, M.J.: Adaptive stress testing for autonomous vehicles. In: 2018 IEEE Intelligent Vehicles Symposium (IV). pp.~1--7. IEEE (2018)

\bibitem{lavalle2006planning}
LaValle, S.M.: Planning algorithms. Cambridge university press (2006)

\bibitem{metadrive}
Li, Q., Peng, Z., Feng, L., Zhang, Q., Xue, Z., Zhou, B.: Metadrive: Composing diverse driving scenarios for generalizable reinforcement learning. IEEE transactions on pattern analysis and machine intelligence  \textbf{45}(3),  3461--3475 (2022)

\bibitem{lanegcn}
Liang, M., Yang, B., Hu, R., Chen, Y., Liao, R., Feng, S., Urtasun, R.: Learning lane graph representations for motion forecasting. In: Vedaldi, A., Bischof, H., Brox, T., Frahm, J. (eds.) Computer Vision - {ECCV} 2020 - 16th European Conference, Glasgow, UK, August 23-28, 2020, Proceedings, Part {II}. Lecture Notes in Computer Science, vol. 12347, pp. 541--556. Springer (2020). \doi{10.1007/978-3-030-58536-5\_32}

\bibitem{pnpnet}
Liang, M., Yang, B., Zeng, W., Chen, Y., Hu, R., Casas, S., Urtasun, R.: Pnpnet: End-to-end perception and prediction with tracking in the loop. In: Proceedings of the IEEE/CVF Conference on Computer Vision and Pattern Recognition. pp. 11553--11562 (2020)

\bibitem{titrated}
Lioutas, V., Scibior, A., Wood, F.: Titrated: Learned human driving behavior without infractions via amortized inference. Transactions on Machine Learning Research  (2022)

\bibitem{adamw}
Loshchilov, I., Hutter, F.: Decoupled weight decay regularization. arXiv preprint arXiv:1711.05101  (2017)

\bibitem{waymobcrl}
Lu, Y., Fu, J., Tucker, G., Pan, X., Bronstein, E., Roelofs, B., Sapp, B., White, B., Faust, A., Whiteson, S., et~al.: Imitation is not enough: Robustifying imitation with reinforcement learning for challenging driving scenarios. arXiv preprint arXiv:2212.11419  (2022)

\bibitem{faf}
Luo, W., Yang, B., Urtasun, R.: Fast and furious: Real time end-to-end 3d detection, tracking and motion forecasting with a single convolutional net. In: Proceedings of the IEEE conference on Computer Vision and Pattern Recognition. pp. 3569--3577 (2018)

\bibitem{occflow}
Mahjourian, R., Kim, J., Chai, Y., Tan, M., Sapp, B., Anguelov, D.: Occupancy flow fields for motion forecasting in autonomous driving. IEEE Robotics and Automation Letters  \textbf{7}(2),  5639--5646 (2022)

\bibitem{lidarsim}
Manivasagam, S., Wang, S., Wong, K., Zeng, W., Sazanovich, M., Tan, S., Yang, B., Ma, W.C., Urtasun, R.: Lidarsim: Realistic lidar simulation by leveraging the real world. In: Proceedings of the IEEE/CVF Conference on Computer Vision and Pattern Recognition. pp. 11167--11176 (2020)

\bibitem{menzel2018scenarios}
Menzel, T., Bagschik, G., Maurer, M.: Scenarios for development, test and validation of automated vehicles. In: 2018 IEEE Intelligent Vehicles Symposium (IV). pp. 1821--1827. IEEE (2018)

\bibitem{wosac}
Montali, N., Lambert, J., Mougin, P., Kuefler, A., Rhinehart, N., Li, M., Gulino, C., Emrich, T., Yang, Z., Whiteson, S., et~al.: The waymo open sim agents challenge. Advances in Neural Information Processing Systems  \textbf{36} (2024)

\bibitem{scenetransformer}
Ngiam, J., Caine, B., Vasudevan, V., Zhang, Z., Chiang, H.T.L., Ling, J., Roelofs, R., Bewley, A., Liu, C., Venugopal, A., et~al.: Scene transformer: A unified architecture for predicting multiple agent trajectories. arXiv preprint arXiv:2106.08417  (2021)

\bibitem{norden2019efficient}
Norden, J., O'Kelly, M., Sinha, A.: Efficient black-box assessment of autonomous vehicle safety. arXiv preprint arXiv:1912.03618  (2019)

\bibitem{o2018scalable}
O'Kelly, M., Sinha, A., Namkoong, H., Tedrake, R., Duchi, J.C.: Scalable end-to-end autonomous vehicle testing via rare-event simulation. Advances in neural information processing systems  \textbf{31} (2018)

\bibitem{openai2021asymmetric}
OpenAI, O., Plappert, M., Sampedro, R., Xu, T., Akkaya, I., Kosaraju, V., Welinder, P., D'Sa, R., Petron, A., Pinto, H.P.d.O., et~al.: Asymmetric self-play for automatic goal discovery in robotic manipulation. arXiv preprint arXiv:2101.04882  (2021)

\bibitem{copo}
Peng, Z., Li, Q., Hui, K.M., Liu, C., Zhou, B.: Learning to simulate self-driven particles system with coordinated policy optimization. Advances in Neural Information Processing Systems  \textbf{34},  10784--10797 (2021)

\bibitem{trajeglish}
Philion, J., Peng, X.B., Fidler, S.: Trajeglish: Learning the language of driving scenarios. arXiv preprint arXiv:2312.04535  (2023)

\bibitem{alvinn}
Pomerleau, D.A.: Alvinn: An autonomous land vehicle in a neural network. Advances in neural information processing systems  \textbf{1} (1988)

\bibitem{pointnet}
Qi, C.R., Su, H., Mo, K., Guibas, L.J.: Pointnet: Deep learning on point sets for 3d classification and segmentation. In: Proceedings of the IEEE conference on computer vision and pattern recognition. pp. 652--660 (2017)

\bibitem{strive}
Rempe, D., Philion, J., Guibas, L.J., Fidler, S., Litany, O.: Generating useful accident-prone driving scenarios via a learned traffic prior. In: Proceedings of the IEEE/CVF Conference on Computer Vision and Pattern Recognition. pp. 17305--17315 (2022)

\bibitem{dagger}
Ross, S., Gordon, G., Bagnell, D.: A reduction of imitation learning and structured prediction to no-regret online learning. In: Proceedings of the fourteenth international conference on artificial intelligence and statistics. pp. 627--635. JMLR Workshop and Conference Proceedings (2011)

\bibitem{plt}
Sadat, A., Ren, M., Pokrovsky, A., Lin, Y.C., Yumer, E., Urtasun, R.: Jointly learnable behavior and trajectory planning for self-driving vehicles. In: 2019 IEEE/RSJ International Conference on Intelligent Robots and Systems (IROS). pp. 3949--3956. IEEE (2019)

\bibitem{sadat2021curation}
Sadat, A., Segal, S., Casas, S., Tu, J., Yang, B., Urtasun, R., Yumer, E.: Diverse complexity measures for dataset curation in self-driving. In: IROS (2021)

\bibitem{itra}
{\'S}cibior, A., Lioutas, V., Reda, D., Bateni, P., Wood, F.: Imagining the road ahead: Multi-agent trajectory prediction via differentiable simulation. In: 2021 IEEE International Intelligent Transportation Systems Conference (ITSC). pp. 720--725. IEEE (2021)

\bibitem{motionlm}
Seff, A., Cera, B., Chen, D., Ng, M., Zhou, A., Nayakanti, N., Refaat, K.S., Al-Rfou, R., Sapp, B.: Motionlm: Multi-agent motion forecasting as language modeling. In: Proceedings of the IEEE/CVF International Conference on Computer Vision. pp. 8579--8590 (2023)

\bibitem{alphagozero}
Silver, D., Schrittwieser, J., Simonyan, K., Antonoglou, I., Huang, A., Guez, A., Hubert, T., Baker, L., Lai, M., Bolton, A., et~al.: Mastering the game of go without human knowledge. nature  \textbf{550}(7676),  354--359 (2017)

\bibitem{sinha2020neural}
Sinha, A., O'Kelly, M., Tedrake, R., Duchi, J.C.: Neural bridge sampling for evaluating safety-critical autonomous systems. Advances in Neural Information Processing Systems  \textbf{33},  6402--6416 (2020)

\bibitem{sukhbaatar2018learning}
Sukhbaatar, S., Denton, E., Szlam, A., Fergus, R.: Learning goal embeddings via self-play for hierarchical reinforcement learning. arXiv preprint arXiv:1811.09083  (2018)

\bibitem{sukhbaatar2017intrinsic}
Sukhbaatar, S., Lin, Z., Kostrikov, I., Synnaeve, G., Szlam, A., Fergus, R.: Intrinsic motivation and automatic curricula via asymmetric self-play. arXiv preprint arXiv:1703.05407  (2017)

\bibitem{intersim}
Sun, Q., Huang, X., Williams, B.C., Zhao, H.: Intersim: Interactive traffic simulation via explicit relation modeling. In: 2022 IEEE/RSJ International Conference on Intelligent Robots and Systems (IROS). pp. 11416--11423. IEEE (2022)

\bibitem{trafficsim}
Suo, S., Regalado, S., Casas, S., Urtasun, R.: Trafficsim: Learning to simulate realistic multi-agent behaviors. In: Proceedings of the IEEE/CVF Conference on Computer Vision and Pattern Recognition (CVPR). pp. 10400--10409 (June 2021)

\bibitem{mixsim}
Suo, S., Wong, K., Xu, J., Tu, J., Cui, A., Casas, S., Urtasun, R.: Mixsim: A hierarchical framework for mixed reality traffic simulation. In: Proceedings of the IEEE/CVF Conference on Computer Vision and Pattern Recognition (CVPR). pp. 9622--9631 (June 2023)

\bibitem{tang2019selfplay}
Tang, Y.: Towards learning multi-agent negotiations via self-play. In: {ICCV} (2019)

\bibitem{idm}
Treiber, M., Hennecke, A., Helbing, D.: Congested traffic states in empirical observations and microscopic simulations. Physical Review E  \textbf{62}(2),  1805--1824 (aug 2000). \doi{10.1103/physreve.62.1805}

\bibitem{openscenario}
e.~V., A.: Asam openscenario v2.0.0  (2024)

\bibitem{attention}
Vaswani, A., Shazeer, N., Parmar, N., Uszkoreit, J., Jones, L., Gomez, A.N., Kaiser, L., Polosukhin, I.: Attention is all you need (2017)

\bibitem{vemprala2021adversarial}
Vemprala, S., Kapoor, A.: Adversarial attacks on optimization based planners. In: 2021 IEEE International Conference on Robotics and Automation (ICRA). pp. 9943--9949. IEEE (2021)

\bibitem{vinitsky2022nocturne}
Vinitsky, E., Lichtl{\'e}, N., Yang, X., Amos, B., Foerster, J.: Nocturne: a scalable driving benchmark for bringing multi-agent learning one step closer to the real world. arXiv preprint arXiv:2206.09889  (2022)

\bibitem{alphastar}
Vinyals, O., Babuschkin, I., Czarnecki, W.M., Mathieu, M., Dudzik, A., Chung, J., Choi, D.H., Powell, R., Ewalds, T., Georgiev, P., et~al.: Grandmaster level in starcraft ii using multi-agent reinforcement learning. Nature  \textbf{575}(7782),  350--354 (2019)

\bibitem{wachi2019failure}
Wachi, A.: Failure-scenario maker for rule-based agent using multi-agent adversarial reinforcement learning and its application to autonomous driving. arXiv preprint arXiv:1903.10654  (2019)

\bibitem{advsim}
Wang, J., Pun, A., Tu, J., Manivasagam, S., Sadat, A., Casas, S., Ren, M., Urtasun, R.: Advsim: Generating safety-critical scenarios for self-driving vehicles. In: Proceedings of the IEEE/CVF Conference on Computer Vision and Pattern Recognition. pp. 9909--9918 (2021)

\bibitem{pegasus2}
Weber, H., Bock, J., Klimke, J., Roesener, C., Hiller, J., Krajewski, R., Zlocki, A., Eckstein, L.: A framework for definition of logical scenarios for safety assurance of automated driving. Traffic injury prevention  \textbf{20}(sup1),  S65--S70 (2019)

\bibitem{frenet}
Werling, M., Ziegler, J., Kammel, S., Thrun, S.: Optimal trajectory generation for dynamic street scenarios in a frenet frame. In: 2010 IEEE international conference on robotics and automation. pp. 987--993. IEEE (2010)

\bibitem{argoverse2}
Wilson, B., Qi, W., Agarwal, T., Lambert, J., Singh, J., Khandelwal, S., Pan, B., Kumar, R., Hartnett, A., Pontes, J.K., Ramanan, D., Carr, P., Hays, J.: Argoverse 2: Next generation datasets for self-driving perception and forecasting. In: NeurIPS Datasets and Benchmarks (2021)

\bibitem{bits}
Xu, D., Chen, Y., Ivanovic, B., Pavone, M.: Bits: Bi-level imitation for traffic simulation. In: 2023 IEEE International Conference on Robotics and Automation (ICRA). pp. 2929--2936. IEEE (2023)

\bibitem{nmp}
Zeng, W., Luo, W., Suo, S., Sadat, A., Yang, B., Casas, S., Urtasun, R.: End-to-end interpretable neural motion planner. In: Proceedings of the IEEE/CVF Conference on Computer Vision and Pattern Recognition. pp. 8660--8669 (2019)

\bibitem{travl}
Zhang, C., Guo, R., Zeng, W., Xiong, Y., Dai, B., Hu, R., Ren, M., Urtasun, R.: Rethinking closed-loop training for autonomous driving. In: Computer Vision--ECCV 2022: 17th European Conference, Tel Aviv, Israel, October 23--27, 2022, Proceedings, Part XXXIX. pp. 264--282. Springer (2022)

\bibitem{rtr}
Zhang, C., Tu, J., Zhang, L., Wong, K., Suo, S., Urtasun, R.: Learning realistic traffic agents in closed-loop. In: Conference on Robot Learning. pp. 800--821. PMLR (2023)

\bibitem{zhang2022adversarial}
Zhang, Q., Hu, S., Sun, J., Chen, Q.A., Mao, Z.M.: On adversarial robustness of trajectory prediction for autonomous vehicles. In: Proceedings of the IEEE/CVF Conference on Computer Vision and Pattern Recognition. pp. 15159--15168 (2022)

\bibitem{trafficbots}
Zhang, Z., Liniger, A., Dai, D., Yu, F., Van~Gool, L.: Trafficbots: Towards world models for autonomous driving simulation and motion prediction. arXiv preprint arXiv:2303.04116  (2023)

\bibitem{zhong2023ctgplusplus}
Zhong, Z., Rempe, D., Chen, Y., Ivanovic, B., Cao, Y., Xu, D., Pavone, M., Ray, B.: Language-guided traffic simulation via scene-level diffusion. In: Conference on Robot Learning (2023)

\bibitem{ctg}
Zhong, Z., Rempe, D., Xu, D., Chen, Y., Veer, S., Che, T., Ray, B., Pavone, M.: Guided conditional diffusion for controllable traffic simulation. In: 2023 IEEE International Conference on Robotics and Automation (ICRA). pp. 3560--3566. IEEE (2023)

\bibitem{smarts}
Zhou, M., Luo, J., Villella, J., Yang, Y., Rusu, D., Miao, J., Zhang, W., Alban, M., Fadakar, I., Chen, Z., et~al.: Smarts: An open-source scalable multi-agent rl training school for autonomous driving. In: Conference on Robot Learning. pp. 264--285. PMLR (2021)

\bibitem{voxelnet}
Zhou, Y., Tuzel, O.: Voxelnet: End-to-end learning for point cloud based 3d object detection. In: Proceedings of the IEEE conference on computer vision and pattern recognition. pp. 4490--4499 (2018)

\bibitem{zhou2022hivt}
Zhou, Z., Ye, L., Wang, J., Wu, K., Lu, K.: Hivt: Hierarchical vector transformer for multi-agent motion prediction. In: Proceedings of the IEEE/CVF Conference on Computer Vision and Pattern Recognition (CVPR) (2022)

\end{thebibliography}
\clearpage

\appendix

\newpage
\null
\begin{center}
  {\Large \bf Learning to Drive via Asymmetric Self-Play \\
    Supplementary Material \par}
  \bigskip\bigskip
\end{center}

In this supplementary material, we present additional implementation details in
\cref{sec:sup-impelementation},
more information about baselines in \cref{sec:sup-baseline},
and more information about the learnable
autonomy systems used in \cref{sec:sup-autonomy},
more detailed theoretical analysis in \cref{sec:sup-theory},
and additional quantitative results in \cref{sec:sup-quant} and additional qualitative
results in \cref{sec:sup-quals}.
The supplementary zip file also includes a video
containing an overview and qualitative results.

\section{Implementation Details}
\label{sec:sup-impelementation}
We first present the full algorithms used in
\cref{sec:exp-behaviors,sec:exp-autonomy} in \cref{algo:sup-learning,algo:sup-autonomy-learning}
respectively.
Then we provide additional implementation details our architecture,
environment, loss and training.

\paragraph{Overall Algorithms:}
For clarity, we present the full training algorithm for the traffic modeling problem setting
in \cref{algo:sup-learning},
and the algorithm for improving end-to-end autonomy in \cref{algo:sup-autonomy-learning}.

\paragraph{Architecture:}
Our transformer model uses a hidden dimensionality of 128, with 8 attention heads and a
feed-forward dimensionality of 512. We use 6 transformer blocks.
Our map encoder uses the same hidden dimensionality of 128. For the edge MLP
in LaneGCN, the dimensionality is 64.
Our action decoder is a 3 layer MLP with hidden dimensionality of 64 as well.

\paragraph{Environment:}
Recall that the kinematic bicycle model~\cite{lavalle2006planning} is used for environment dynamics.
The state in the bicycle model state is
\begin{equation}
    s = (x, y, \theta, v)
\end{equation}
where $x,y$ is the position of the center of the rear axel,
$\theta$ is the yaw, and $v$ is the velocity.
The actions are
\begin{equation}
    a = (u, \phi)
\end{equation}
where $u$ is the acceleration, and $\phi$ is the steering angle.
The dynamics function $\dot{s} = f(s, a)$ is then defined as
\begin{align}
    \dot{x}      & = v \cos(\theta)           \\
    \dot{y}      & = v \sin(\theta)           \\
    \dot{\theta} & = \frac{v}{L} \tan({\phi}) \\
    \dot{v}      & = u
\end{align}
where $L$ is wheelbase length, \ie the distance between the rear and front axel.
We use a finite difference approach to compute the next state
\begin{equation}
    s_{t+1} = s_t + f(s_t, a_t)\mathrm{d} t.
\end{equation}
For traffic modeling, our simulation frequency is 2Hz, so
$\mathrm{d}t = 0.5$
\begin{algorithm}[t]
    \caption{Asymmetric Self-play}
    \label{algo:sup-learning}
    \begin{algorithmic}[1]
        \STATE Initialize $\pi_T$ and $\pi_S$
        \FOR{$i=1, \ldots, \texttt{num\_iters}$}
        \STATE Sample initial state and map $\vs_1, \vm$ from dataset $\gD$
        \STATE Randomly partition all $N$ actors into $\gT$ and $\gS$
        \STATE Initialize target information $\zeta$ for actors in $\gT$

        \IF{$\text{Uniform(0, 1) < 0.5}$}
        \STATE Sample $\textcolor{ForestGreen}{\bm{\vs_{\leq T}, \va_{\leq T}}}$
        using $\pi_T(\cdot | \vs_1, \vm, \gT, \zeta)$
        \STATE Sample $\textcolor{NavyBlue}{\bm{\widetilde{\vs}_{\leq T}, \widetilde{\va}_{\leq T}}}$
        using $\hat{\pi}_{TS}(\cdot | \vs_1, \vm, \gT, \zeta)$,
        with replayed actions (\cref{eq:replay})

        \ELSE
        \STATE Sample $\textcolor{NavyBlue}{\bm{\widetilde{\vs}_{\leq T}, \widetilde{\va}_{\leq T}}}$
        using $\pi_{TS}(\cdot | \vs_1, \vm, \gT, \zeta)$
        \STATE Sample $\textcolor{ForestGreen}{\bm{\vs_{\leq T}, \va_{\leq T}}}$
        using $\hat{\pi}_T(\cdot | \vs_1, \vm, \gT, \zeta)$, with replayed actions (\cref{eq:replay})
        \ENDIF

        \STATE Compute $\textcolor{ForestGreen}{\bm{R_T}}$ using $\textcolor{ForestGreen}{\bm{\vs_{\leq T}, \va_{\leq T}}}$ (\cref{eq:alice-objective})
        \STATE Compute $\textcolor{NavyBlue}{\bm{R_S}}$ using $\textcolor{NavyBlue}{\bm{\widetilde{\vs}_{\leq T}, \widetilde{\va}_{\leq T}}}$(\cref{eq:bob-objective})
        \STATE Update $\pi_T$ parameters with $\nabla \textcolor{ForestGreen}{\bm{R_T}}$
        \STATE Update $\pi_S$ parameters with $\nabla \textcolor{NavyBlue}{\bm{R_S}}$

        \ENDFOR
    \end{algorithmic}
\end{algorithm}

\paragraph{Loss:}
As described in the main paper, our IL regularization loss is given as
\begin{equation}
    L_{\text{IL}} = \E_\pi \left[\frac{1}{T} \sum_{t=1}^T D(\vs_t, \vs^{\data}_t) \right]
    \label{eq:sup-il}
\end{equation}
where $D$ is the Huber loss.

Recall that we make use a differentiable collision function as well.
To compute a pairwise collision loss, vehicles are approximated with
5 circles, and the L2 distance between centroids
of the closest circles of each pair of actors is used.
Specifically, we have
\begin{equation}
    \ell(s^i, s^j) =
    \min_{P \times Q} \text{relu}(r_p + r_q - d_{pq} + b)
\end{equation}
where $P$ and $Q$ is the set of circles for actor $i$ and $j$ respectively,
$r$ is the radius of a circle, $d_{pq}$ is the L2 distance between the centroids of two circles,
$b$ is an additional safety buffer (which we set to 0.2), and
$\text{relu}(x) = \max(0, x)$.
The total collision loss can be computed as the average of the pairwise collision losses
\begin{equation}
    L_{\text{Collision}} = \frac{1}{NT} \sum_{t=1}^T \sum_{i\neq j} \ell(s^i_t, s^j_t)
    \label{eq:sup-collision}
\end{equation}

\paragraph{Training:}
We use AdamW~\cite{adamw} as our optimizer. We use a linear warmup over 100 steps to an initial
learning rate of 0.0001 before using a cosine decay schedule down to a learning rate of 0.
For \textsc{Highway}, we train for 10000 steps, and for \textsc{Argoverse2} we train for 30000 steps.
We use a batch size of 32---note that because are using a closed-loop learning approach,
a single example corresponds to a full rollout (20 steps for \textsc{Highway}, 12 steps for \textsc{Argoverse2}).

\begin{algorithm}[t]
    \caption{Zero-shot Scenario Generation for Learnable Autonomy}
    \label{algo:sup-autonomy-learning}
    \begin{algorithmic}[1]
        \STATE Initialize learnable autonomy $\pi_A$
        \STATE Obtain expert privileged autonomy policy $\pi_E$
        \STATE Obtain pre-trained teacher policy $\pi_T$ from \cref{algo:sup-learning}
        \STATE Initialize $\gD_{\text{Autonomy}} = \emptyset$
        \FOR{$i=1, \ldots, \texttt{desired\_size}$}

        \STATE Sample initial state and map $\vs_1, \vm$ from dataset $\gD$
        \STATE Randomly partition all $N$ actors into $\gT$ and $\gS$, ensuring ego actor is in $\gS$
        \STATE Initialize target information $\zeta$ for actors in $\gT$,  ensuring ego actor is targeted

        \STATE Sample $\vs_{\leq T}, \va_{\leq T}$ using $\pi_{T}(\cdot | \vs_1, \vm, \gT, \zeta)$, with $\pi_E$ controlling the ego actor
        \STATE Obtain sensor data $X_{\leq T} = \text{LidarSim}(\vs_{\leq T}, \va_{\leq T})$ from state data
        \STATE Add to dataset $\gD_{\text{Autonomy}} = \gD_{\text{Autonomy}} \cup \{(X_{\leq T}, \vs_{\leq T})\}$
        \ENDFOR

        \FOR{$i=1, \ldots, \texttt{num\_iters}$}
        \STATE Sample $(X_{\leq T}, \vs_{\leq T}) \sim \gD_{\text{Autonomy}}$
        \STATE Compute max margin loss $J$ using $\pi_A, \pi_E, X_{\leq T}, \vs_{\leq T}$ (\cref{eq:sup-max-margin})
        \STATE Update $\pi_A$ using $\nabla J$
        \ENDFOR
    \end{algorithmic}
\end{algorithm}

\section{Baselines}
\label{sec:sup-baseline}
In this section, we provide more details on the baselines used in
\cref{sec:exp-behaviors,sec:exp-autonomy}.
All baselines are adapted to use the same architecture, input/output representation, and
environment dynamics model as our approach when applicable.
We now present specific details for each individual baseline.

\paragraph{Closed-loop IL~\cite{trafficsim}:}
This baseline is representative of state of the art supervised learning approaches
to traffic modeling. We use the same IL loss described in \cref{eq:sup-il}.

\paragraph{TrafficSim~\cite{trafficsim}:}
This baseline further incorporates prior knowledge to closed-loop IL. We use the same collision
loss as described in \cref{eq:sup-collision}.

\paragraph{SMARTS~\cite{smarts}:}
This baseline is representatitive of multiagent reinforcement learning (MARL), or standard self-play approaches.
Our reward consists of collision, off-road, route progress and route completion. For this baseline, collision
is computed exactly by looking at bounding box overlap between actors as a differentiable relaxation is no
longer needed when using RL. Off-road is similarly computed by seeing if an actors' bounding box leaves
the drivable area.
Both collision and off-road are sparse, and return $-1$.
Actors that encounter collision or off-road events have their episode terminated.
Since we initialize scenarios from logs, we reconstruct a route for each actor using their ground
truth future trajectory. Specifically, the route is a sequence of lane graph nodes that are closest to the trajectory.
Route progress is then computed as
\begin{equation}
    R_{\text{progress}}(s_t, s_{t-1}) = \max(p_t - p_{t-1}, \text{speed limit}) \exp (-0.2 c_t)
\end{equation}
where $p_t$ is the normalized distance along the route at time $t$, and $c_t$ is the cross track distance
away from the route, to penalize route deviation.
Route completion gives a reward for when an actor reaches past 95\% of the distance along the route, and also terminates their episode.
The total reward we use is then
\begin{equation}
    R_{\text{total}} = R_{\text{collision}}  + R_{\text{off-road}} + 0.05 R_{\text{progress}} + 0.01
\end{equation}
where we have a small reward of $0.01$ for continuing to survive without having the episode terminated.
To improve training efficiency, when one actor has their episode terminated, they are simply
removed from the scene, and the remaining actors continue simulation. This prevents extremely short
episodes in the beginning of training.

Note that our total reward is defined on a per-actor basis. Following~\cite{rtr}, we use a per-actor
factorized PPO loss.
The value model is trained using per-agent value targets, which are computed
with per-agent rewards $R^{(i)}_t = R^{(i)}(\vs_t, a_t^{(i)})$
\begin{align}
    \gL^{\text{value}} & = \sum_i^N(\hat{V}^{(i)} - V^{(i)})^2 \\
    V^{(i)}            & = \sum_{t=0}^T \gamma^t R^{(i)}_t
\end{align}
The per-actor GAE is computed as
\begin{equation}
    A^{(i)} = \text{GAE}(R^{(i)}_0, \dots, R^{(i)}_{T-1}, \hat{V}^{(i)}(\vs_T)).
\end{equation}
The PPO policy loss is then computed a sum over a per-actor PPO loss,
\begin{equation}
    \gL^{\text{policy}} = \sum_{i=1}^N \min(r^{(i)} A^{(i)}, \text{clip}(r^{(i)}, 1-\epsilon, 1+\epsilon) A^{(i)})
\end{equation}
and the overall loss is a combination of the policy and value loss
\begin{equation}
    \gL^{\text{RL}} = \gL^{\text{policy}} + \gL^{\text{value}}.
\end{equation}

Finally, unlike other baselines, our MARL baseline uses a discrete action space outperforms
a continuous action space. Specifically, our action space is the cross product of
5 lateral buckets and 10 longitudinal buckets. We found that by increasing simulation
frequency from 2hz to 10hz, this discrete action space performs better than
simply using continuous actions.

\paragraph{Emb. Syn. \cite{bronstein2022embedding}:}
This baseline is representative of curation and upsampling approaches.
Originally, \cite{bronstein2022embedding} uses a 14000-hour internal
driving dataset, and train a difficulty classifier to upsampling
difficult scenarios. However, in our case, the dataset sizes are more limited,
\eg Argoverse Motion is among the largest public driving datasets, and contains around
700 hours. Thus, to adapt curation approaches to these dataset scales, rather
than training a difficulty classifier, we simply
find scenarios that the baseline Closed-loop IL approach fails and create our curated set based on that.
Then, we additionally fine tune the same baseline model on the curated set of failure cases.
Failure in this case is simply defined as any scenario where at least actor
is colliding with another actor.

\paragraph{KING~\cite{king}:}
This baseline is representative of adversarial optimization based approaches.
\cite{king} exploits the differentiability of the bicycle model to directly
do gradient-based optimization of an adversarial objective.
In our implementation, we use the same adversarial actor and target selection as
our approach.
The adversarial objective is defined as
\begin{equation}
    L_{adv} = - 10 L_{\text{Collision}} + L_{\text{Distance}} + L_{\text{IL}}
\end{equation}
where the collision and IL loss are those defined in \cref{eq:sup-il,eq:sup-collision},
and the distance loss is simply the L2 distance to the targeted actor.
Adversarial scenarios can then be found by optimizing this objective, with the constraint that
for scenarios where a collision is found, a kinematically feasible solution can also be found.
We find the solution by simply optimizing $L_\text{Collision}$ for the non-adversarial actors.

To use KING to improve actor models, we train using the same losses as TrafficSim, and
include an equal mix of nominal and KING-discovered scenarios. Specifically, the adversarial
optimization is run online against the current learning policy.
To use KING to improve autonomy, we perform adversarial optimization against the
expert autonomy to create a dataset, and train on the resulting scenarios.
All scenarios are used regardless if a collision was actually found, since
for many cases even if no collision is found, the expert autonomy is forced to perform an evase
maneuver, which serves as good training data.

\section{Learnable Autonomy}
\label{sec:sup-autonomy}
In this section, we provide additional details on the learnable autonomy systems
used in \cref{sec:exp-autonomy}.

\paragraph{Object-based Autonomy:}
The most common structured autonomy paradigm consists of chaining
perception, prediction, and motion planning.
Following~\cite{detra}, we use a joint perception and prediction transformer backbone.
Firstly, LiDAR features are extracted by using a PointNet~\cite{pointnet}
for points residing in each voxel~\cite{voxelnet}, before a ResNet~\cite{resnet}
backbone further encodes the voxelized features into a multi-scale BEV feature map.
Similar to our actor model architecture, map features are extracted using a
LaneGCN~\cite{lanegcn} with GoRela~\cite{gorela} positional encodings.
Then, object queries and poses are used to represent
an object's trajectory. $B$ transformer blocks are used to refine the
initial pose estimates using both self-attention and LiDAR and map cross attention, with the set of poses at the end of the last block
acting as the final detections and motion forecasts.

For the motion planning component, we use a trajectory sampler
which samples longitudinal and lateral trajectories with respect to
several reference lanes in Frenet frame~\cite{frenet,plt,travl}. Specifically,
Specifically, longitudinal trajectories can be obtained
through quartic spline fitting with knots that correspond to various
speed profiles, and lateral trajectories can be obtained
by fitting quintic splines to knots that correspond to various lateral offsets
defined with respect to the reference lanes, at different longitudinal locations.

These samples are then costed using several features including
the acceleration and jerk of the trajectories, progress, traffic rule violation,
collision with actor predicted plans, headway to actor predictions, etc.
Specifically, we simply take a linear combination of all features, and the
weights are the learnable component of the motion planner.
To learn these weights, we use max margin loss~\cite{plt,nmp}.
Let $J(\vx, \tau) = \sum_i c_i \cdot f_i(\tau, \vx)$ be the linear combination
of features for a trajectory $\tau$ using learnable weights $c_i$, where $\vx$
are the perception and prediction outputs.
Then the loss is defined as
\begin{equation}
    L =
    \max_\tau \text{relu}\left[
    \Delta J_r(\vx, \tau, \tau_\text{expert}) +
    \ell_\text{im} +
    \sum_t \text{relu}( \Delta J_c^t(\vx, \tau, \tau_\text{expert}) + \ell_c^t)
    \right]
    \label{eq:sup-max-margin}
\end{equation}
where
\begin{equation}
    \Delta J(\vx, \tau, \tau_\text{expert}) = J(\vx, \tau_\text{expert}) - J(\vx, \tau)
\end{equation}
is the difference between the cost of the expert trajectory and the candidate trajectory,
and $\ell_{im}$ is the imitation task loss (L2 distance between $\tau$ and $\tau_\text{expert}$)
and  $\ell_{c}$ is the collision safety task loss (whether the planned trajectory collides
with the ground truth rollout).
Intuitively, we want to lower the cost of the expert trajectory, and raise the cost of the
worst offending prediction trajectory.
Note that we have split up $J$ into $J_c$ and $J_r$ to represent the collision component of the cost
and the remaining cost features respectively. By making this decomposition
and imposing the task-loss per time-step separately, we make sure that the
safety margin is achieved irrespective of other less important costs at different timesteps.

\paragraph{Object-free Autonomy:}
As an alternative to object-based autonomy, the object-free paradigm
uses occupancy to understand free-space. By removing the assumption
of a discrete set of objects, occupancy has the potential to retain
more information about the scene and reason better about uncertainty.
Following~\cite{biswas2024quad},
we extract map and LiDAR features similarly
as object-based autonomy before using an implicit occupancy decoder~\cite{agro2023implicito}
to predict occupancy at a set of query points. Query points are
sampled around the ego vehicle and the trajectory samples. This is more efficient than
using an explicit occupancy grid, which can be wasteful since many areas are not
used for motion planning, and also suffer from discretization error.
We use the same trajectory sampler and max margin learning technique as
the object-based approach. Trajectory features that rely on object instances
(\eg bounding-box collision) are replaced with their object-free counterparts (\eg
occupancy overlap).

\section{Theoretical Analysis}
\label{sec:sup-theory}
In this section, we provide more detailed steps and analysis for the proof outlined in the main paper.

\begin{definition}
    A policy $\pi_Y$ is $\alpha$-$\beta$-optimal if $\forall \pi_X$ where
    $I_\data(\pi_{XY}) > \alpha$ and $C(\pi_{X}, N) = 0$,
    \begin{equation}\label{eq:sup-beta-optimal-condition}
        \left( C(\pi_{XY}, \gS) > 0 \right) \iff \left(I_\data(\pi_X) < I_\data(\pi_{XY}) - \frac{1}{\beta} \right)
    \end{equation}
\end{definition}

\begin{lemma}\label{lemma:sup-realism-reward}
    If $\pi_T$ and $\pi_S$ are in equilibrium
    ($\pi_T$ cannot improve without changing $\pi_S$ and vice versa),
    then
    $R_T \leq 2 \beta I_\data(\pi_{TS})$.
\end{lemma}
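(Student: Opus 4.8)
The plan is to argue by contradiction, using two facts: in equilibrium the student cannot unilaterally improve its objective \cref{eq:bob-objective}, and (under the universal/3-player assumptions) the student's policy class is rich enough to imitate the teacher. Suppose $R_T > 2\beta I_\data(\pi_{TS})$. Substituting the definition of the teacher reward \cref{eq:alice-objective},
\begin{equation*}
    C(\pi_{TS},\gS) - C(\pi_T,N) + \beta\bigl(I_\data(\pi_T) + I_\data(\pi_{TS})\bigr) > 2\beta I_\data(\pi_{TS}),
\end{equation*}
and then moving $\beta I_\data(\pi_{TS})$ and $C(\pi_{TS},\gS)$ to the other side yields
\begin{equation*}
    -C(\pi_T,N) + \beta I_\data(\pi_T) \;>\; -C(\pi_{TS},\gS) + \beta I_\data(\pi_{TS}).
\end{equation*}
The right-hand side is exactly $R_S$ evaluated at the current pair $(\pi_T,\pi_S)$ (compare \cref{eq:bob-objective}).

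Next I would exhibit a student deviation, holding $\pi_T$ fixed, that attains at least the left-hand side and hence strictly beats $R_S$. Let the student replace $\pi_S$ by a copy of $\pi_T$ — in the 3-player formulation of \cref{sec:fairplay}, a copy of the demonstrator sub-policy — so that the combined policy $\pi_{TS}$ becomes $\pi_T$ controlling all $N$ actors. Under this deviation the student's new return is $-C(\pi_T,\gS) + \beta I_\data(\pi_T)$; since the failure indicators satisfy $c_i \ge 0$ and $\gS \subseteq \{1,\dots,N\}$ we have $C(\pi_T,\gS) \le C(\pi_T,N)$, so
\begin{equation*}
    -C(\pi_T,\gS) + \beta I_\data(\pi_T) \;\ge\; -C(\pi_T,N) + \beta I_\data(\pi_T) \;>\; R_S .
\end{equation*}
Thus the student strictly improves without changing $\pi_T$, contradicting equilibrium; therefore $R_T \le 2\beta I_\data(\pi_{TS})$.

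The algebraic rearrangement is routine; the real content is the middle step — justifying that ``the student copies the teacher'' is an admissible deviation. This is where the universal-policy hypothesis and the 3-player construction matter: the demonstrator sub-policy must be representable in the student's class, the realism term must transform as $I_\data(\pi_{TS}) \mapsto I_\data(\pi_T)$ under imitation, and the failure term must be controlled by the monotonicity $C(\pi_T,\gS)\le C(\pi_T,N)$. I expect that making ``copying'' precise (and noting it holds $\pi_T$ fixed, so the equilibrium best-response condition for $\pi_S$ applies) is the only subtle point; everything else is bookkeeping.
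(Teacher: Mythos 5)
Your proof is correct and follows essentially the same route as the paper's: assume $R_T > 2\beta I_\data(\pi_{TS})$, rearrange to show $-C(\pi_T,N)+\beta I_\data(\pi_T) > R_S$, and conclude via the monotonicity $C(\pi_T,\gS)\le C(\pi_T,N)$ that the student copying the teacher is a strictly improving deviation, contradicting equilibrium. The only difference is cosmetic --- the paper writes the copied configuration as $\pi_{TT}$ where you write $\pi_T$ restricted to $\gS$ --- and you correctly flag the same subtle point (admissibility of the copying deviation) that the paper's supplementary proof also makes explicit.
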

\begin{proof}
    Let us assume that $R_T > 2\beta I_\data(\pi_{TS})$. We will now show a contradiction.
    We begin by substituting in the definition of $R_T$ in \cref{eq:alice-objective}
    \begin{equation}
        - C(\pi_T, N) + C(\pi_{TS}, \gS) + \beta(I_\data(\pi_T) + I_\data(\pi_{TS}))  > 2\beta I_\data(\pi_{TS}) .
    \end{equation}
    Rearranging terms gives
    \begin{equation}
        -C(\pi_T, N)  + \beta I_\data(\pi_T)                                    >  -C(\pi_{TS}, \gS)  + \beta I_\data(\pi_{TS}).
    \end{equation}
    Substituting the definition of $R_S$ in \cref{eq:bob-objective} gives
    \begin{equation}
        -C(\pi_T, N)  + \beta I_\data(\pi_T)                                    >  R_S.
        \label{eq:sup-alice-max-reward}
    \end{equation}
    However, note that $\pi_T$ can be alternatively written as $\pi_{TT}$ (\ie, $\pi_T$ interacting with itself, as defined in \cref{eq:multiagent-policy})
    \begin{equation}
        \label{eq:sup-pi-TT} -C(\pi_T, N)  + \beta I_\data(\pi_T)  = -C(\pi_{TT}, N) + \beta I_\data(\pi_{TT})
    \end{equation}
    However, because $\gS$ is a subset of $N$, we have
    \begin{equation}
        \label{eq:sup-pi-TT-2}    -C(\pi_T, N)  + \beta I_\data(\pi_T)  \leq -C(\pi_{TT}, \gS) + \beta I_\data(\pi_{TT})
    \end{equation}
    Substituting back into \cref{eq:sup-alice-max-reward} clearly
    shows that $\pi_S$ can simply improve its return by copying $\pi_T$, \ie $\pi_S \leftarrow \pi_T$.
    This then contradicts the equilibrium assumption.
    \qed
\end{proof}

\begin{theorem}
    If $\pi_T$ and $\pi_S$ are in equilibrium,
    then $\pi_S$ is $\alpha$-$\beta$-optimal,
    where $\alpha = I_\data(\pi_{TS}) + \frac{1}{2\beta}$.
\end{theorem}
\begin{proof}
    Again, we will assume that $\pi_S$ is not $\alpha$-$\beta$-optimal and show a contradiction.
    If $\pi_S$ is not optimal, then by definition
    there must exist a $\pi_X$ where $I_\data(\pi_{XS}) > \alpha$ and $C(\pi_{X}, N) = 0$ for which
    \begin{equation}
        \label{eq:sup-theorem-p0}
        \left(C(\pi_{XS}, \gS)> 0 \right) \land
        \left(I_\data(\pi_X)  > I_\data(\pi_{XS}) - \frac{1}{\beta}\right).
    \end{equation}
    First, since we know $C(\pi_{X}, N) = 0$, it follows that
    \begin{equation}
        C(\pi_{XS}, \gS) > C(\pi_{X}, N).
    \end{equation}
    Incorporating the second term in the compound inequality in \cref{eq:sup-theorem-p0} and rearranging terms gives
    \begin{gather}
        \label{eq:sup-theorem-p1} C(\pi_{XS}, \gS) + \beta I_\data(\pi_X)  > C(\pi_X, N) + \beta I_\data(\pi_{XS}) - 1  \\
        - C(\pi_X, N)  +  C(\pi_{XS}, \gS)  > \beta \left( I_\data(\pi_{XS}) - I_\data(\pi_X) \right) - 1.
    \end{gather}
    Adding  $\beta\left(I_\data(\pi_{XS}) + I_\data(\pi_X) \right)$ to both sides gives
    \begin{equation}
        - C(\pi_X, N)  +  C(\pi_{XS}, \gS)  +  \beta\left(I_\data(\pi_{XS}) + I_\data(\pi_X) \right)  > 2\beta I_\data(\pi_{XS}) - 1.
    \end{equation}
    Applying \cref{lemma:sup-realism-reward} gives
    \begin{equation}
        - C(\pi_X, N)  +  C(\pi_{XS}, \gS)  +  \beta\left(I_\data(\pi_{XS}) + I_\data(\pi_X) \right)  > R_T
    \end{equation}
    However, this shows that $\pi_T$ can improve by copying
    $\pi_X$, contradicting the equilibrium assumption. \qed
\end{proof}

Note that the lower $\alpha$ is, the more scenarios the optimality
since lowering $\beta$ increases $\alpha$ due to the
$\frac{1}{2\beta}$ term, but decreases $\alpha$ as it lowers the reward
$\pi_T$ gets for increasing $I(\pi_{TS})$.
One can interpret this observation as there being a trade-off between the degree of realism
and collision avoidance of the learned policy.

\section{Additional Quantitative Results}
\label{sec:sup-quant}
Due to space constraints, \cref{tab:traffic,tab:autonomy} only reported the mean over 3 seeds. We report the full table with standard
deviation included below in \cref{tab:traffic-supp,tab:autonomy-supp} accordingly. We see that our findings are stable across seeds.
\begin{sidewaystable}
    \setlength{\tabcolsep}{3pt}
    \centering
    \scriptsize
    \begin{tabular}{l|c|cccc|cccc}
        \toprule
                                                                  & \textsc{Safety}
                                                                  & \multicolumn{4}{c|}{\textsc{Highway}}
                                                                  & \multicolumn{4}{c}{\textsc{Argoverse2}}                                                                                                                                                                                                                                                     \\
        Model                                                     & Col.                                    & FDE                         & Col.                        & Offroad                     & JSD                           & FDE                         & Col.                        & Offroad                     & JSD                           \\
        \cmidrule(r){1-1}
        \cmidrule(lr){2-2}
        \cmidrule(lr){3-6}
        \cmidrule(l){7-10}
        Closed-loop \emph{(IL)}~\cite{trafficsim}                 & 40.41 $\pm$ 3.24                        & \textbf{5.70 $\pm$ 0.06}    & 1.88 $\pm$ 0.10             & 1.43 $\pm$ 0.15             & \textbf{0.460 $\pm$ 0.003}    & \textbf{4.95 $\pm$ 0.04}    & 1.02 $\pm$ 0.05             & \textbf{3.14} $\pm$ 0.08    & \underline{0.436 $\pm$ 0.005} \\
        TrafficSim \emph{(IL+Prior)}~\cite{trafficsim}            & 26.69 $\pm$ 4.71                        & 5.83 $\pm$ 0.05             & \underline{0.37 $\pm$ 0.05} & \textbf{1.39 $\pm$ 0.12}    & 0.466 $\pm$ 0.009             & 5.13  $\pm$ 0.03            & \underline{0.33 $\pm$ 0.02} & 3.36  $\pm$ 0.15            & 0.437   $\pm$ 0.009           \\
        SMARTS \emph{(MARL)}~\cite{smarts}                        & 13.65 $\pm$ 2.25                        & 20.2 $\pm$ 3.13             & 0.99 $\pm$ 0.20             & 2.97 $\pm$ 0.41             & 0.501 $\pm$ 0.007             & 16.3  $\pm$ 4.29            & 8.12 $\pm$ 0.55             & 17.2   $\pm$ 3.33           & 0.528   $\pm$ 0.004           \\
        Emb. Syn. \emph{(Curation)}~\cite{bronstein2022embedding} & 27.75 $\pm$ 4.07                        & 6.46 $\pm$ 0.05             & 4.34 $\pm$ 0.27             & 1.67 $\pm$ 0.36             & 0.490 $\pm$ 0.006             & 6.89  $\pm$ 0.04            & 2.02   $\pm$ 0.09           & 4.30   $\pm$ 0.12           & 0.449   $\pm$ 0.005           \\
        KING \emph{(Adversarial)} ~\cite{king}                    & \underline{12.65 $\pm$ 2.80}            & 5.80 $\pm$ 0.04             & 1.42 $\pm$ 0.21             & 1.59 $\pm$ 0.19             & 0.475 $\pm$ 0.010             & 6.33  $\pm$ 0.04            & 1.16   $\pm$ 0.05           & \underline{3.29 $\pm$ 0.16} & 0.465   $\pm$ 0.007           \\
        \rowcolor{blue!10}    Ours                                & \textbf{8.16 $\pm$ 1.36}                & \underline{5.76 $\pm$ 0.06} & \textbf{0.00 $\pm$ 0.00}    & \underline{1.40 $\pm$ 0.08} & \underline{0.462 $\pm$ 0.004} & \underline{5.04 $\pm$ 0.05} & \textbf{0.24 $\pm$ 0.03}    & 3.39   $\pm$ 0.27           & \textbf{0.433 $\pm$ 0.005}    \\ \bottomrule
    \end{tabular}
    \caption{\textbf{Traffic Simulation Results}. On \textsc{Safety}, \textsc{Highway}, and \textsc{Argoverse2},
        our approach obtains the best collision rates without sacrificing other realism metrics. Mean and standard deviation
        over 3 seeds are reported.
    }
    \label{tab:traffic-supp}

    \bigskip\bigskip\bigskip\bigskip

    \setlength{\tabcolsep}{2.5pt}
    \centering
    \scriptsize
    \begin{tabular}{llc|cccccc|ccccc}
        \toprule
                                           &
                                           &
                                           & \multicolumn{6}{c|}{\textsc{Safety}}
                                           & \multicolumn{5}{c}{\textsc{Highway}}                                                                                                                                                                                                                                                                                            \\
        Autonomy                           & Train Data                           & Priv

                                           & \makecell{GSR                                                                                                                                                                                                                                                                                                                   \\($\uparrow$)}
                                           & \makecell{Col                                                                                                                                                                                                                                                                                                                   \\($\downarrow$)}
                                           & \makecell{mTTC                                                                                                                                                                                                                                                                                                                  \\($\mathrm{\Delta}$)}
                                           & \makecell{Prog                                                                                                                                                                                                                                                                                                                  \\($\mathrm{\Delta}$)}
                                           & \makecell{P2E                                                                                                                                                                                                                                                                                                                   \\($\mathrm{\Delta}$)}
                                           & \makecell{Accel                                                                                                                                                                                                                                                                                                                 \\($\mathrm{\Delta}$)}
                                           & \makecell{Col                                                                                                                                                                                                                                                                                                                   \\($\downarrow$)}
                                           & \makecell{mTTC                                                                                                                                                                                                                                                                                                                  \\($\mathrm{\Delta}$)}
                                           & \makecell{Prog                                                                                                                                                                                                                                                                                                                  \\($\mathrm{\Delta}$)}
                                           & \makecell{P2E                                                                                                                                                                                                                                                                                                                   \\($\mathrm{\Delta}$)}
                                           & \makecell{Accel                                                                                                                                                                                                                                                                                                                 \\($\mathrm{\Delta}$)} \\
        \cmidrule(r){1-3}
        \cmidrule(lr){4-9}
        \cmidrule(l){10-14}
        \rowcolor{gray!10} \textsc{Expert} &                                      & \cmark & 90.6                    & 0.0                    & 5.82                     & 232                  & 0.17                     & 0.85                     & \textbf{0.0} & 4.15                     & 483                  & 0.27                     & 0.25                     \\
        \cmidrule(r){1-3}
        \cmidrule(lr){4-9}
        \cmidrule(l){10-14}
        \rowcolor{gray!10} \multirow{5}{*}{
        \makecell[l]{Object-                                                                                                                                                                                                                                                                                                                                                 \\
        based}}                            & \textsc{Safety}                      & \cmark & 80.1 $\pm$ 3.1          & 0.0  $\pm$  0.0        & 5.84 $\pm$ 0.06          & 236 $\pm$ 5          & 0.35 $\pm$ 0.10          & 0.91 $\pm$ 0.08          & \textbf{0.0} & 4.28  $\pm$ 0.05         & 487 $\pm$ 5          & 0.05 $\pm$ 0.01          & 0.14  $\pm$ 0.02         \\
                                           & \textsc{Highway}                     &        & 40.2 $\pm$ 4.4          & 58.3 $\pm$  4.1        & 3.35 $\pm$ 0.12          & 280 $\pm$ 9          & 1.01 $\pm$ 0.09          & 1.41 $\pm$ 0.18          & \textbf{0.0} & \textbf{4.16} $\pm$ 0.01 & 498 $\pm$ 1          & 0.02 $\pm$ 0.01          & 0.14   $\pm$ 0.02        \\
                                           & IL~\cite{trafficsim}                 &        & 45.6 $\pm$ 3.5          & 59.7 $\pm$  2.6        & 3.51 $\pm$ 0.18          & 277 $\pm$ 3          & 0.90 $\pm$ 0.23          & 1.39 $\pm$ 0.07          & \textbf{0.0} & 4.17 $\pm$ 0.02          & 498 $\pm$ 1          & 0.02 $\pm$ 0.00          & 0.11   $\pm$ 0.05        \\
                                           & Adv.~\cite{king}                     &        & 83.1 $\pm$ 2.8          & 6.2  $\pm$  1.5        & 5.44  $\pm$ 0.12         & 253 $\pm$ 4          & 0.45 $\pm$ 0.06          & 0.99 $\pm$ 0.15          & \textbf{0.0} & 4.20  $\pm$ 0.05         & 500 $\pm$ 3          & 0.03 $\pm$ 0.00          & 0.12  $\pm$ 0.04         \\
        \rowcolor{blue!10}                 & Ours                                 &        & \textbf{92.6 $\pm$ 4.3} & \textbf{0.0 $\pm$ 0.0} & \textbf{5.81 $\pm$ 0.15} & \textbf{247 $\pm$ 2} & \textbf{0.36 $\pm$ 0.05} & \textbf{0.88 $\pm$ 0.09} & \textbf{0.0} & 4.29   $\pm$ 0.05        & \textbf{482 $\pm$ 2} & \textbf{0.09 $\pm$ 0.01} & \textbf{0.18 $\pm$ 0.03} \\
        \cmidrule(r){1-3}
        \cmidrule(lr){4-9}
        \cmidrule(l){10-14}
        \rowcolor{gray!10} \multirow{5}{*}{
        \makecell[l]{Object-                                                                                                                                                                                                                                                                                                                                                 \\
        free}}                             & \textsc{Safety}                      & \cmark & 63.8 $\pm$ 3.8          & 0.0 $\pm$ 0.0          & 6.09 $\pm$ 0.10          & 172 $\pm$ 5          & 0.45 $\pm$ 0.08          & 1.21 $\pm$ 0.12          & \textbf{0.0} & 4.75 $\pm$ 0.06          & 295 $\pm$ 6          & 0.82 $\pm$ 0.05          & 1.10 $\pm$ 0.07          \\
                                           & \textsc{Highway}                     &        & 32.5 $\pm$ 3.5          & 51.8 $\pm$ 3.8         & 3.12 $\pm$ 0.15          & \textbf{265 $\pm$ 7} & 1.13 $\pm$ 0.10          & 1.30 $\pm$ 0.15          & \textbf{0.0} & \textbf{4.54 $\pm$ 0.02} & \textbf{458 $\pm$ 3} & 0.50 $\pm$ 0.03          & 0.38 $\pm$ 0.04          \\
                                           & IL~\cite{trafficsim}                 &        & 36.2 $\pm$ 3.2          & 52.0 $\pm$ 3.5         & 3.03 $\pm$ 0.14          & 268 $\pm$ 6          & 1.10 $\pm$ 0.11          & 1.29 $\pm$ 0.13          & \textbf{0.0} & 4.60 $\pm$ 0.03          & 460 $\pm$ 2          & 0.47 $\pm$ 0.02          & 0.37 $\pm$ 0.03          \\
                                           & Adv.~\cite{king}                     &        & 39.1 $\pm$ 3.0          & 50.5 $\pm$ 3.2         & 3.00 $\pm$ 0.13          & 271 $\pm$ 5          & 1.04 $\pm$ 0.09          & 1.28 $\pm$ 0.14          & \textbf{0.0} & 4.48 $\pm$ 0.04          & 465 $\pm$ 3          & \textbf{0.42 $\pm$ 0.02} & \textbf{0.34 $\pm$ 0.03} \\
        \rowcolor{blue!10}                 & Ours                                 &        & \textbf{63.9 $\pm$ 4.0} & \textbf{0.0 $\pm$ 0.0} & \textbf{6.11 $\pm$ 0.11} & 171 $\pm$ 4          & \textbf{0.54 $\pm$ 0.07} & \textbf{1.25 $\pm$ 0.11} & \textbf{0.0} & 4.65 $\pm$ 0.05          & 298 $\pm$ 5          & 0.77 $\pm$ 0.04          & 1.04 $\pm$ 0.06          \\ \bottomrule
    \end{tabular}
    \caption{\textbf{End-to-end autonomy results} on \textsc{Safety} and \textsc{Highway}.
        $(\uparrow / \downarrow)$ denotes higher/lower is better, $(\mathrm{\Delta})$ denotes closer to expert is better.
        Among the unprivileged methods, we obtain the best overall performance, with emphasis on \textsc{Safety}.
        Mean and standard deviation over 3 seeds are reported.
    }
    \label{tab:autonomy-supp}
\end{sidewaystable}

\section{Additional Qualitative Results}
\label{sec:sup-quals}
\paragraph{Traffic Modeling:}
We present additional qualitative examples of scenarios discovered throughout the course of asymmetric self-play training
on the \textsc{Highway} dataset in \cref{fig:sup-highway-qual}.

\paragraph{Autonomy:}
In \cref{fig:sup-autonomy-qual}, we present qualitative comparison of our object-based autonomy trained only on real
data vs. teacher-generated scenarios, evaluated on
\textsc{Safety} scenarios.
\begin{figure}[t]
    \centering
    \includegraphics*[width=\linewidth]{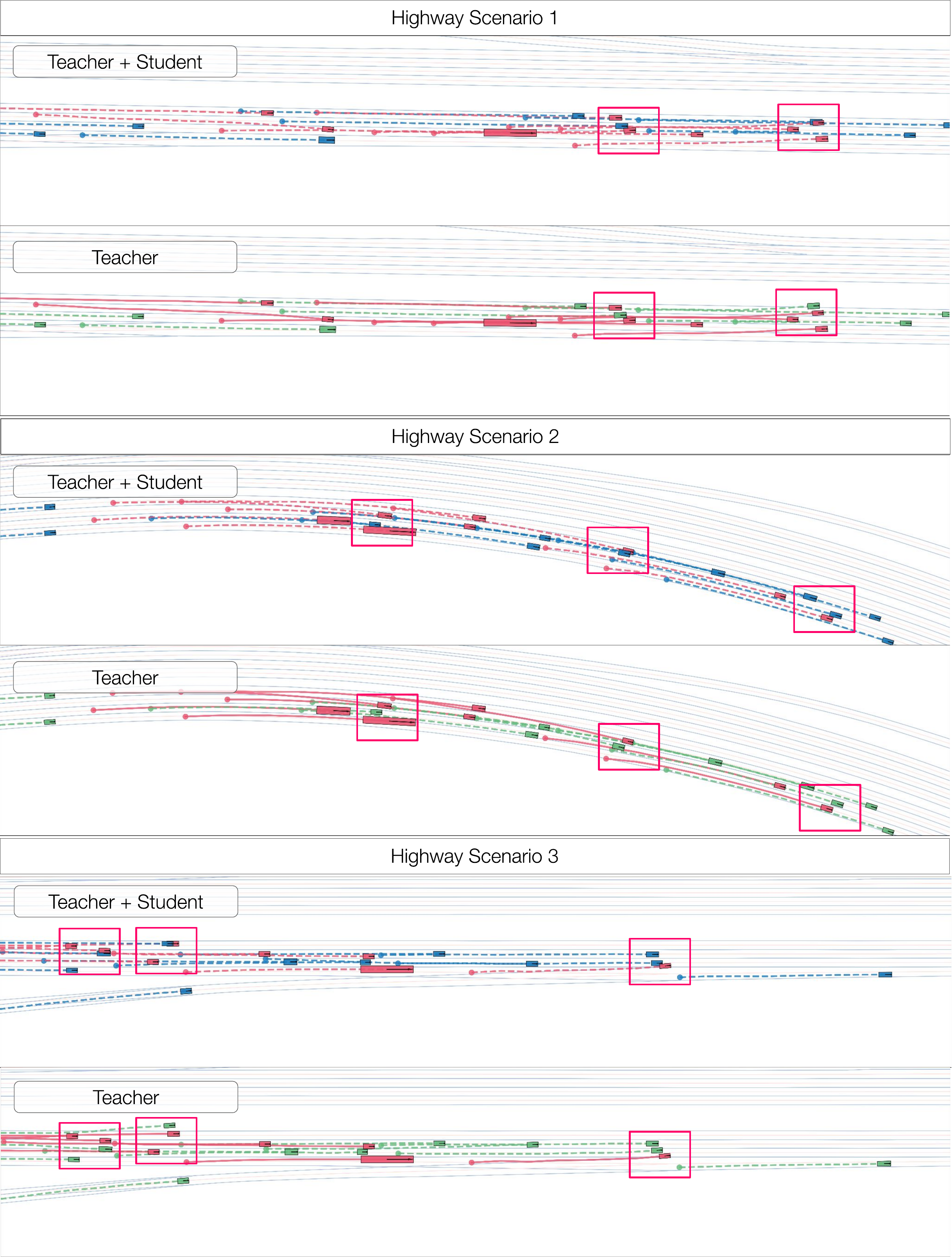}
    \caption{\textbf{Qualitative Examples} of scenarios discovered through
        our asymmetric self-play approach on \textsc{Highway}.
    }
    \label{fig:sup-highway-qual}
\end{figure}
\clearpage
\begin{figure}[t]
    \centering
    \includegraphics*[width=\linewidth]{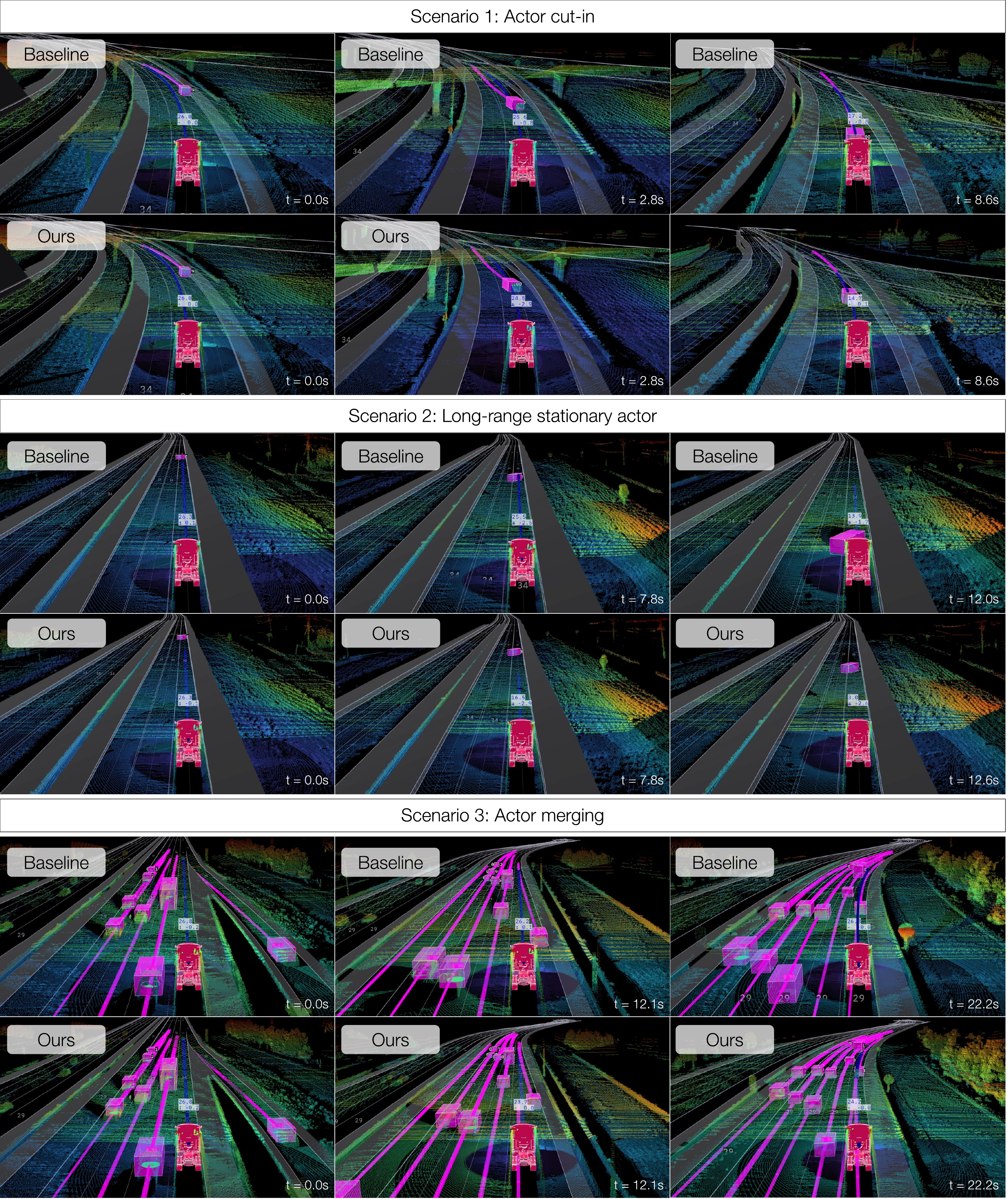}
    \caption{\textbf{Qualitative Comparison} for learned autonomy models.
        \textbf{Top}: Actor cut-in scenario for the \textsc{Safety} set. The baseline model trained only on real data does not react in time to the cut-in,
        resulting in a rear end collision. Our approach has had more exposure to these
        type of scenarios due to training with the teacher and has learned to react in time.
        \textbf{Middle}: Stationary actor scenario from the \textsc{Safety} set. The baseline model trained only on real data begins to slow down but is ultimately too late,
        resulting in an unavoidable collision.
        Our approach has learned that in order to avoid collision, it must react immediately, and comes to a stop in time.
        \textbf{Bottom}: A merge scenario from the \textsc{Highway} set. Both approaches are collision free,
        but we see our approach is more courteous, and slows down more for the merging actor.
    }
    \label{fig:sup-autonomy-qual}
\end{figure}
\clearpage

\end{document}